\def\eqref#1{equation~\ref{#1}}
\def\1{\bm{1}}
\DeclareMathAlphabet{\mathsfit}{\encodingdefault}{\sfdefault}{m}{sl}
\SetMathAlphabet{\mathsfit}{bold}{\encodingdefault}{\sfdefault}{bx}{n}
\newcommand{\E}{\mathbb{E}}
\newcommand{\R}{\mathbb{R}}
\theoremstyle{definition}
\newtheorem{theorem}{Theorem}[section]
\newtheorem{assumption}[theorem]{Assumption} 
\newtheorem{lemma}[theorem]{Lemma}
\newtheorem{remark}[theorem]{Remark}
\newtheorem{corollary}[theorem]{Corollary}
\def \d{\mathrm d}
\def \E{\mathbb E}
\def \R{\mathbb{R}}
\title{Dimension-free error estimate for diffusion model and optimal scheduling}
\author{\name Valentin de Bortoli \thanks{Authors are listed in alphabetical order.} \email vdebortoli@google.com \\
      \addr Google DeepMind
      \AND
      \name Romuald Elie \email relie@google.com \\
      \addr Google DeepMind
      \AND
      \name Anna Kazeykina \email anna.kazeykina@universite-paris-saclay.fr\\
      \addr Universit\'e Paris Saclay\\
      \AND
      \name Zhenjie Ren \email zhenjie.ren@univ-evry.fr \\
      \addr  Universit\'e Evry - Paris Saclay
      \AND
      \name Jiacheng Zhang \email jiachengzhang@cuhk.edu.hk \\
      \addr Chinese University of Hong Kong
      }
\begin{document}

\maketitle

\begin{abstract}
Diffusion generative models have emerged as powerful tools for producing synthetic data from an empirically observed distribution. A common approach involves simulating the time-reversal of an Ornstein–Uhlenbeck (OU) process initialized at the true data distribution. Since the score function associated with the OU process is typically unknown, it is approximated using a trained neural network. This approximation, along with finite time simulation, time discretization and statistical approximation, introduce several sources of error whose impact on the generated samples must be carefully understood.
Previous analyses have quantified the error between the generated and the true data distributions in terms of Wasserstein distance or Kullback–Leibler (KL) divergence. However, both metrics present limitations: KL divergence requires absolute continuity between distributions, while Wasserstein distance, though more general, leads to error bounds that scale poorly with dimension, rendering them impractical in high-dimensional settings.
In this work, we derive an explicit, dimension-free bound on the discrepancy between the generated and the true data distributions. The bound is expressed in terms of a smooth test functional with bounded first and second derivatives. The key novelty lies in the use of this weaker, functional metric to obtain dimension-independent guarantees, at the cost of higher regularity on the test functions. As an application, we formulate and solve a variational problem to minimize the time-discretization error, leading to the derivation of an optimal time-scheduling strategy for the reverse-time diffusion. Interestingly, this scheduler has appeared previously in the literature in a different context; our analysis provides a new justification for its optimality, now grounded in minimizing the discretization bias in generative sampling.
\end{abstract}

\section{Introduction}
Generative models (GMs) have attracted substantial interest in machine learning over the past decade. Their core objective is to learn the underlying probability distribution of observed data, enabling the generation of new samples that resemble the training set. Notable classes of GMs include Variational Autoencoders (VAEs), Generative Adversarial Networks (GANs), and diffusion-based models (DMs), each employing distinct optimisation objectives to achieve faithful data synthesis and generalisation \cite{goodfellow2014generative, kingma2013auto, ho2020denoising, rezende2014stochastic}.

Score-based generative diffusion models (SGMs) have recently gained particular attention, achieving state-of-the-art results in generative modelling \cite{song2019generative, song2021scorebased}. Roughly speaking, the functioning of SGMs can be described as follows. Let $m_0$ denote the true data distribution that the model aims to learn. One first simulates an Ornstein–Uhlenbeck (OU) process starting from $m_0$; for sufficiently large times, its marginal distribution $m_t$ approaches the invariant measure of the OU process, denoted $m^*$, which is Gaussian. In the second step, a time-reversed process is simulated, initialised at $m^*$, to obtain an approximation of the original distribution $m_0$. However, the score $\nabla \ln m_t$ required for the time-reversal dynamics is unknown in practice and must be approximated, for example, by a neural network trained via score matching. Furthermore, a time-discretisation scheme is introduced to numerically simulate the reversed process.

This procedure inevitably introduces several sources of error: the \emph{statistical error}, due to replacing the true data distribution with its empirical approximation from finite samples; the \emph{finite-time simulation error}, stemming from approximating the true marginal distribution of the OU process by its invariant measure at large times; the \emph{score-matching error}, arising from training the neural network to approximate the score function; and the \emph{time-discretisation error}, incurred when simulating the reversed process numerically.

From a practical perspective, it is crucial to understand how these errors propagate through the diffusion model and impact the generated outputs. Most existing analyses of error estimation in diffusion models rely on relatively strong regularity assumptions on the score function or its estimator \cite{chen2023diffmodels, debortoli2023convergencediffusion, debortoli2021diffschrodingerbridge, chen2023improvedanalysis, lee2022polynomialcomplexity, lee2023generaldistributions}. More recent work \cite{conforti2025diffusion} has proposed introducing an early stopping rule in the backward process to derive explicit and sharp bounds on the Kullback–Leibler (KL) divergence between the data distribution and the generative model output.

In the existing literature, the discrepancy between the generated distribution and the empirical data distribution is typically evaluated using the KL divergence or the Wasserstein distance, each of which has limitations. In particular, since the empirical approximation $m_0^N$ of the learned distribution $m_0$ is not absolutely continuous, the KL divergence is not applicable to quantify the statistical error. As for the Wasserstein distance, it yields an estimate of the form $W( m_0, m_0^N ) = O( N^{-1/d} )$, where $d$ is the data dimension \cite{FournierGuillin}. This curse of dimensionality on the estimate becomes very poor in high-dimensional settings common to real-world applications.

In this paper, we propose to overcome these limitations by introducing a weaker distance, defined through a class of smooth test functions with bounded derivatives. This approach yields statistical error estimates that are well-defined for empirical measures and free from the curse of dimensionality.

The main contribution of this paper is the derivation of a \emph{dimension-free} explicit bound on the error between the true data distribution and the distribution generated by the SGM. This error is measured via a distance defined through a class of smooth test functions with bounded first and second derivatives. Achieving a dimension-free estimate comes at the cost of requiring higher regularity of the test functionals.

We begin by deriving the dimension-free error bound in the continuous-time setting. The resulting upper bound decomposes naturally into contributions from the statistical error, the finite-time simulation error, and the score-matching error incurred during the diffusion process. We then extend our analysis to incorporate a time discretisation scheme for simulating the time-reversal process. To improve the efficiency of the backward simulation, we introduce time scheduling into the time-reversal diffusion and derive the resulting error estimate as a function of both the discretisation step size and the scheduling function.

Importantly, minimising the derived error bound yields an \emph{optimal scheduler} that enhances the accuracy of the generated approximation of the data distribution. This optimisation problem is equivalent to a variational problem for the scheduler, whose continuous-time solution can be obtained explicitly. Interestingly, the scheduler thus derived coincides with a scheduling scheme that has already been proposed in the SGM literature, where it has been empirically shown to improve generative performance \cite{albergo2023stochasticinterpolants}. Other scheduling strategies have also been studied previously \cite{strasman2025noiseschedule}. Our theoretical results provide a rigorous justification for the optimality of the obtained scheduler, demonstrating that it minimises the error associated with the time discretisation of the inverse diffusion process.

\section{Background}

We first introduce diffusion model following the Stochastic Differential Equation (SDE) point of view of \cite{song2021scorebased}. We highlight that this is not the only possible presentation of diffusion models and more recent ones include Flow Matching \cite{lipman2022flow} and Stochastic Interpolant \cite{albergo2023stochastic}. However, focus on the SDE perspective as it is amenable to our theoretical investigation. 

We aim at generating samples from a distribution $m_0$. Let $m_0$ denote the true underlying data distribution, and define the empirical distribution based on $N$ available data points as $m^N_0 := \frac{1}{N} \sum_{i=1}^N \delta_{x^i_0}$, where the samples $x^i_0$ are assumed to be i.i.d.\ draws from $m_0$.
Let $X$ be forward diffusion:
\[dX_t = b(X_t) dt + dW_t.\]
Denote by $m_t: = {\rm Law}(X_t)$ given $X_0\sim m_0$, and by $m_t^N: = {\rm Law}(X_t)$ given $X_0\sim m_0^N$. 

In order to generate sample from the distribution $m_0$ we consider the \emph{time-reversal} of $X$. Under mild assumptions, see \cite{cattiaux2023time} for instance, we have that the backward process $(\overleftarrow{X}_t )_{t \in [0,T]}$ is given by the following SDE
\begin{equation*}
\label{eq:backward_eqn}
d\overleftarrow{X}_t =
\Big(- b(\overleftarrow{X}_t)  +   \frac{\alpha^2 +1}{2}\nabla \log m_{T-t}^N(\overleftarrow{X}_t)\Big)dt +\alpha d\overleftarrow{W}_t, \quad\mbox{for $\alpha\ge 0$.}
\end{equation*}
Denote by $\mu^N_t := {\rm Law}(\overleftarrow{X}_t)$ given ${\rm Law}(\overleftarrow{X}_0) = m^N_0$. In particular, note that $\mu^N_T  =m^N_0$.
Next, introduce the parametrized function $S$ to approximate the score function $\nabla\log m^N$:
\begin{equation}
\label{eq:loss_function}
   \inf_{S\in \mathcal{S}} \int_\delta^T \int \left|\nabla\log m^N_t(x) - S_t(x)\right|^2 m^N_t(dx) dt.
\end{equation} 
In practice, $S$ is parameterized with an expressive neural network and the loss is minimized using \eqref{eq:loss_function} using Stochastic Gradient descent derived optimizers. 

\section{Main result}

\subsection{Continuous-time model}\label{subsec:continuous}

In this work, we are interested in the accuracy of the generation of \eqref{eq:backward_eqn}. More precisely, we are interested in the error between the distribution of $\mu_{T-\delta}^*$, which correspond to the distribution of the generative process stopped at time $T-\delta$, and $m_0$, the underlying target data distribution. We begin by considering the case without time-discretization and extend our results to deal with the time-discretization in Section \ref{sec:discrete_time}. 

\begin{assumption}\label{assum:contraction}
   Assume that the drift $b$ is Lipschitz, that the forward diffusion admits a unique invariant measure $m^*$, and that $H(m_t|m^*)\le e^{-2\rho t} H(m_0|m^*)$ for some $\rho>0$.
\end{assumption}

\begin{assumption}\label{assum:S}
Assume that there exists a score matching function $S^*$ such that
    \begin{enumerate}
        \item $x\mapsto S^*_t(x; m^N_0)$ is $L$-Lipschitz on $t\in [\delta, T]$;
        \item The score matching error satisfies:
    \end{enumerate}
        \[\mathbb{E}\left[ \int_\delta^T \int \left|\nabla\log m^N_t(x) - S^*_t(x)\right|^2 m^N_t(dx) dt\right] \le \epsilon^2.\]
\end{assumption}

\begin{remark}
   Although \(m_t^N\) is singular at \(t=0\), the Hessian of its log-density is bounded on any interval $[\delta, T]$ with \(\delta>0\).
In the Ornstein–Uhlenbeck case \(b(x)=-x\), a direct computation gives
\[
\bigl\|\nabla^{2}\log m_t^N\bigr\|_{\mathrm{op}}
\;\le\;
\frac{1}{1-e^{-2t}}
\;+\;
\frac{e^{-2t}\,\max_{i,j}\|x_0^{i}-x_0^{j}\|^{2}}{(1-e^{-2t})^{2}}.
\]
Consequently, for any \(0<\delta\le t\le T\), the right-hand side is finite, and the bound depends only on
\(\delta\), $T$ and the initial diameter \(\max_{i,j}\|x_0^{i}-x_0^{j}\|\).
Therefore, when using \(S_t^{*}\) to approximate \(\nabla\log m_t^N\), it is justified to
require \(S_t^{*}\) to be Lipschitz in \(x\) uniformly for \(t\in[\delta,T]\).
\end{remark}

Now consider the diffusion driven by the score matching function:
\begin{equation*}
\label{eq:backward_eqn_S}
d\overleftarrow{X}^*_t = \Big(- b(\overleftarrow{X}^*_t)  +   \frac{\alpha^2 +1}{2}S^*_{T-t}(\overleftarrow{X}^*_t)\Big)dt
+\alpha d\overleftarrow{W}_t,\quad\mbox{for $\alpha\ge 0$.}
\end{equation*}
Denote by $\mu^*_t:= {\rm Law}(\overleftarrow{X}^*_t)$ given ${\rm Law}(\overleftarrow{X}^*_0) = m^*$.  We are going to estimate the  error between $\mu^*_T$ and $\mu^N_T =m^N_0$, and further obtain the  error between $\mu^*_T$ and $m_0$.

Unlike the vast majority of related literature, which estimates the error between the target measure $m_0$ and the simulated measure $\mu^*_T$ under some well- known metrics or divergences, such as Wassertein distance, total variation and Kullback–Leibler divergence, we shall consider the error through regular test functionals. As we shall see, the regularity of the test functionals helps to obtain a dimension-free upper bound on the error. 

Let $\mathcal{P}_2$ be the set of probability measures on $\mathbb{R}^d$ with finite second moment, and $G:\mathcal{P}_2 \rightarrow \mathbb{R}$ be a test functional. We say that $G$ is linearly differentiable, if there exists a linear derivative $\frac{\delta G}{\delta m}: \mathcal{P}_2\times\mathbb{R}^d\rightarrow\mathbb{R}$ such that for any $m,m'\in \mathcal{P}_2$ 
\begin{equation*}
G(m')-G(m) = \\ \int_0^1\int_{\mathbb{R}^d}\frac{\delta G}{\delta m}\big(\lambda m' +(1-\lambda)m , x\big)(m'-m)(dx)d\lambda.
\end{equation*}
Similarly, we may define the second order linear derivative $\frac{\delta^2 G}{\delta m^2}$. Further, we recall the definition of the intrinsic derivative:
\[D_m G(m,x):  = \nabla \frac{\delta G}{\delta m}(m,x),\]
where $\nabla$ denotes as usual the gradient in $x$. 

\begin{assumption}\label{assum:G}
    Assume that $G$ admits bounded   intrinsic derivatives $D_m G$ and $\nabla D_m G$, as well as bounded linear derivatives $\frac{\delta G}{\delta m}$ and $\frac{\delta^2 G}{\delta m^2}$. 
\end{assumption}

\begin{theorem}[Error for continuous time-reversed diffusion]\label{thm:continuous}
    Let $S^*$ and $G$ satisfy Assumptions \ref{assum:contraction}, \ref{assum:S} and \ref{assum:G}. Then the  error reads
\begin{equation*}
    \mathbb{E}\Big[\left|G\big(\mu^*_{T-\delta}\big) - G(m_0)\right|^2 \Big]\le C \left(\frac{1}{N} + e^{-2\rho T} + C_T\epsilon^2(\alpha^2 +1)^2+\delta^2 \right).
\end{equation*}
\end{theorem}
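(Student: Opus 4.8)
The plan is to decompose the error into three pieces via a triangle-type inequality for the test functional $G$:
\[
\bigl|G(\mu^*_{T-\delta}) - G(m_0)\bigr|
\;\le\;
\bigl|G(\mu^*_{T-\delta}) - G(\mu^N_{T-\delta})\bigr|
+
\bigl|G(\mu^N_{T-\delta}) - G(m^N_0)\bigr|
+
\bigl|G(m^N_0) - G(m_0)\bigr|,
\]
where $\mu^N_t$ denotes the law of the exact backward diffusion driven by the true score $\nabla\log m^N$ started at $m^*$ (not at $m^N_0$). The last term is the statistical error, the middle term is the finite-time/initialization error, and the first term is the score-matching error. After squaring and using $(a+b+c)^2\le 3(a^2+b^2+c^2)$ it suffices to bound each piece.

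For the statistical term $\mathbb{E}[|G(m^N_0)-G(m_0)|^2]$, I would use the linear differentiability of $G$ to write $G(m^N_0)-G(m_0) = \int_0^1\!\int \frac{\delta G}{\delta m}(\lambda m^N_0+(1-\lambda)m_0,x)\,(m^N_0-m_0)(dx)\,d\lambda$. Since $m^N_0$ is an empirical average of i.i.d.\ samples, I would replace $\frac{\delta G}{\delta m}$ along the interpolation path by its value at the fixed measure $m_0$ up to a second-order correction controlled by the bounded $\frac{\delta^2 G}{\delta m^2}$; the leading term is then a centered i.i.d.\ average of the bounded function $x\mapsto\frac{\delta G}{\delta m}(m_0,x)$, whose second moment is $O(1/N)$, and the correction is $O(1/N)$ as well since $\|m^N_0-m_0\|$ appears squared. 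This yields the $1/N$ contribution, dimension-free because only sup-norm bounds on the derivatives of $G$ are used.

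For the middle term I would use Assumption~\ref{assum:contraction}: $\mu^N_{T-\delta}$ and $m^N_0=\mu^N_T$ are, up to the time reparametrization $t\mapsto T-t$, the marginals at times $\delta$ and $0$ of the \emph{forward} OU-type flow, or equivalently the backward flow started from $m^*$ versus from $m^N_0$ contracts in relative entropy at rate $e^{-2\rho t}$; combined with Pinsker and the Lipschitz-in-$W_1$ (hence in bounded-Lipschitz) property of $G$ coming from the bounded intrinsic derivative $D_mG$, this gives a bound of order $e^{-2\rho T}$ (plus a $\delta^2$-type remainder from stopping at $T-\delta$ instead of $T$, using the Lipschitz regularity of the flow on $[\delta,T]$). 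For the score-matching term I would set up a coupling/Girsanov or, more in the spirit of the functional approach, an It\^o--Alekseev--Gr\"obner expansion: writing $u(t,m):=G$ evaluated along the exact backward flow and differentiating $t\mapsto G(\mu^{*}_t)$ vs.\ $t\mapsto G(\mu^N_t)$, the difference of generators involves $\langle D_mG,\ \tfrac{\alpha^2+1}{2}(S^*_{T-t}-\nabla\log m^N_{T-t})\rangle$ plus second-order terms with $\nabla D_mG$; integrating over $[0,T-\delta]$, using boundedness of $D_mG$ and $\nabla D_mG$, Cauchy--Schwarz in $(t,x)$ against the reference measure $m^N_t$, and Assumption~\ref{assum:S} gives the $C_T\epsilon^2(\alpha^2+1)^2$ term, where $C_T$ absorbs a Gr\"onwall factor coming from the $L$-Lipschitz regularity of $S^*$ used to propagate the marginals' regularity.

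The main obstacle I expect is the score-matching term: controlling $G(\mu^*_{T-\delta})-G(\mu^N_{T-\delta})$ requires comparing two diffusions with different, only $L^2(dt\,dm^N_t)$-close drifts, and the natural Girsanov bound would reintroduce the score error \emph{inside} a KL divergence between path measures, exactly the kind of estimate that is not dimension-free. The resolution is to avoid Girsanov on the full path space and instead propagate the perturbation through the flow of marginals using the linear-functional derivative: the quantity $\frac{d}{dt}[G(\mu^*_t)-G(\mu^N_t)]$ can be written via the Fokker--Planck equations and the chain rule for linear derivatives, after which the score discrepancy enters only linearly and integrated against the explicit reference measure $m^N_t$, so that Cauchy--Schwarz and Assumption~\ref{assum:S} close the estimate with constants depending on $\|D_mG\|_\infty$, $\|\nabla D_mG\|_\infty$, $L$ and $T$ but not on $d$. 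Ensuring the Gr\"onwall constant $C_T$ genuinely depends only on $T$ (and $L$, $\rho$) and not on the dimension is the delicate bookkeeping step.
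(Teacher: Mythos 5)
Your decomposition and core mechanism match the paper's: both rely on a backward-Kolmogorov value function defined on the space of measures, compare the two flows by plugging one flow's generator into the other's PDE so that only the drift discrepancy $S^* - \nabla\log m^N$ survives, control the finite-time piece via Pinsker plus the exponential contraction of $H(m_t|m^*)$, and control the empirical piece via bounded first and second linear derivatives. Two small but real differences are worth noting. First, you anchor the value function $u(t,m)$ to the flow driven by the \emph{true} score $\nabla\log m^N$, whereas the paper anchors $U$ to the flow driven by $S^*$. The paper's choice is more economical: regularity of $U$ (Lemma~\ref{lem:regularity}) follows directly from the assumed $L$-Lipschitzness of $S^*$, while your choice would require uniform Lipschitz control of $\nabla\log m^N_t$ on $[\delta,T]$ — true, but only discussed in a remark, and with a constant depending on the sample diameter rather than a fixed $L$. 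Second, you introduce an intermediate process (true-score backward flow initialized at $m^*$) giving a three-term decomposition, whereas the paper treats $G(\mu^*_{T-\delta}) - G(m^N_\delta)$ in one stroke through $U$, then separately adds $G(m^N_\delta)-G(m_\delta)$ and $G(m_\delta)-G(m_0)$; the two groupings are equivalent.

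One step you sketch too quickly is the $O(1/N)$ bound for the correction term in the statistical error. You claim it is $O(1/N)$ ``since $\|m^N_0-m_0\|$ appears squared,'' but there is no norm that makes this immediate: the squared Wasserstein distance of an empirical measure is dimension-dependent, precisely what the paper is trying to avoid. The paper's Lemma~\ref{lem:UN} instead expands the second-order remainder as a V-statistic-type double sum and handles the off-diagonal terms by a leave-two-out decorrelation argument (constructing $m_T^{N,-i,-j}$ and exploiting conditional independence of $\varphi_\lambda^{i,-i,-j}$ and $\varphi_\lambda^{j,-i,-j}$), which is what actually delivers the dimension-free $1/N$ rate using only sup-norms of $\frac{\delta\varphi}{\delta m}$ and $\frac{\delta^2\varphi}{\delta m^2}$. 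Your instinct that bounded first and second linear derivatives suffice is right, but this decorrelation is the nontrivial part, not a one-line consequence of the expansion.
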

\begin{remark}
    Note that $S^*$ and $\mu^*_T$ are random due to the randomness of $m_0^N$.
\end{remark}

\subsection{Discrete-time model and optimal scheduling}
\label{sec:discrete_time}

We further investigate the error when we discretize the time-reversal diffusion. In order to study the optimal discretization scheme, we introduce the scheduling $(g(t))_t$ of the forward diffusion:
\[dX_t = -\dot g(t) X_t dt + \sqrt{\dot g(t)} dW_t.\] Compared to the study in Section \ref{subsec:continuous}, we restrict ourselves to the case $b(x) = -x$. Let $(m^N_t)_t$ be its marginal laws given $X_0\sim m^N_0$. Note that the invariant measure $m^*$ is Gaussian $\mathcal{N}(0, \frac12 I_d)$ in this case. 

Define
\(p^N_t:= m^N_t/m^*\), so that
\[\nabla \ln p^N_t(x):=\nabla \ln m^N_t(x) + 2x.\]
The continuous time-reversal diffusion reads:
\begin{equation*}
\d\overleftarrow{X}_t = \dot g(T-t)\Big( - \overleftarrow{X}_t  +  \nabla \log p_{T-t}^N(\overleftarrow{X}_t)\Big)\d t\\
+ \sqrt{\dot g(T-t)}\d\overleftarrow{W}_t.
\end{equation*}
Denote as in Section \ref{subsec:continuous} by $\mu^N_t := {\rm Law}(\overleftarrow{X}_t)$ given ${\rm Law}(\overleftarrow{X}_0) = m^N_T$. 
We are going to study the discretized time-reversal diffusion, where $\nabla \log p^N$ is approximated by the score function: for $t\in[t_k, t_{k+1})$ 
\begin{equation*}
\d\overleftarrow{X}^h_t = \dot g(T-t) \Big(-\overleftarrow{X}^h_t  +   S^*_{T-{t_k}}(\overleftarrow{X}^h_{t_k})\Big)\d t \\
+  \sqrt{\dot g(T-t)}\d\overleftarrow{W}_t.
\end{equation*}
with $t_0=0$, $t_{k+1}-t_k=h$, $t_K=T-\delta$ and  $\overleftarrow{X}^h_0\sim m^*$. Denote by  $\mu_t^h \coloneqq {\rm Law}(\overleftarrow{X}^h_t)$.

\begin{assumption}\label{assum:score_discrete}
Assume that:
   \begin{enumerate}
        \item $x\mapsto S^*_t(x; m^N_0)$ is $L$-Lipschitz on $t\in [\delta, T]$;
        \item  The score matching error is well-controlled
\end{enumerate}
\begin{equation}
\label{eq:dicrete score}
   \sum_{k=0}^{K-1} \int_{t_k}^{t_{k+1}}\dot g(T-t)\d t \\
   \times \int \left|\nabla\log p^N_{T-t_{k}}(x) - S^*_{T-t_{k}}(x)\right|^2 m^N_{T-t_k}(\d x)  \le \epsilon^2.
\end{equation}
\end{assumption}

\begin{remark}
Compared to Section~\ref{subsec:continuous}, we use the score function to approximate $\nabla \log p^N$   rather than $\nabla \log m^N$. This choice is motivated by two main reasons. First, when $b(x) = -x$, the invariant measure $m^*$ is explicitly known and Gaussian. Second, the proof of our discrete-time result relies on key estimates involving $\nabla \log p^N_{T-t}(\overleftarrow{X}_t)$. This dependence also explains why we restrict ourselves to the case $\alpha = 1$, in contrast to Section~\ref{subsec:continuous}: for $\alpha \neq 1$, the definition of $\overleftarrow{X}$ changes, and the required estimate no longer holds.
\end{remark}


Take a linear test functional 
\[G(m):= \langle \varphi, m\rangle. \]
We consider the error between $G(\mu^h_{T-\delta}) $ and $ G(m_0)$.

\begin{assumption}\label{assum:phi}
    Assume that $\varphi$ is bounded and of bounded derivatives $\nabla \varphi, \Delta \varphi$, and that  $g$ is increasing and convex with $g(\delta)=\delta$.
\end{assumption}

\begin{theorem}\label{thm:discrete}
    Under Assumptions \ref{assum:score_discrete} and \ref{assum:phi}, we have
\begin{multline*}
\mathbb{E}\Big[\left|G\big(\mu^h_{T-\delta}\big) - G(m_0)\right|^2 \Big] \lesssim \\ \big(\|\nabla\varphi\|^2_\infty + \|\Delta\varphi\|^2_\infty\big)\delta^2+ \|\varphi\|^2_\infty \Big(e^{-g(T)} + \frac{1}{N}\Big) +C_T\|\nabla\varphi\|^2_\infty \epsilon^2 +\dot g(T) e^{2(\delta -g(T))}  +
\\ C_T \|\nabla\varphi\|^2_\infty \mathcal{I}(m^N_{\delta}| m^*) h \Big(\sum_{k=0}^{K-1}  \int_{t_k}^{t_{k+1}}|\dot g(T-t)|^2 \d t e^{2(\delta -g(T-t_{k+1}))} \Big).
\end{multline*}
\end{theorem}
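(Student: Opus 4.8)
The plan is to decompose the error $G(\mu^h_{T-\delta}) - G(m_0)$ into a chain of intermediate quantities, estimate each difference separately, and then combine. Since $G(m) = \langle\varphi,m\rangle$ is linear, I would write
\[
G(\mu^h_{T-\delta}) - G(m_0) = \underbrace{\big(G(\mu^h_{T-\delta}) - G(\mu^N_{T-\delta})\big)}_{\text{discretization + score error}} + \underbrace{\big(G(\mu^N_{T-\delta}) - G(m^N_\delta)\big)}_{\text{early stopping / finite time}} + \underbrace{\big(G(m^N_\delta) - G(m^N_0)\big)}_{\text{early stopping, forward}} + \underbrace{\big(G(m^N_0) - G(m_0)\big)}_{\text{statistical}}.
\]
The last term is handled by a standard Monte Carlo / variance argument: $\mathbb{E}|\langle\varphi, m^N_0-m_0\rangle|^2 = \frac{1}{N}\Var_{m_0}(\varphi) \le \|\varphi\|_\infty^2/N$, giving the $\|\varphi\|^2_\infty/N$ term. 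The third term, $G(m^N_\delta)-G(m^N_0)$, is controlled by the regularity of $\varphi$ along the short forward OU flow on $[0,\delta]$: using It\^o's formula on $t\mapsto \langle\varphi, m^N_t\rangle$ and the generator bound one gets a term of order $(\|\nabla\varphi\|_\infty + \|\Delta\varphi\|_\infty)\delta$, hence $\delta^2$ after squaring. The contribution $e^{-g(T)}$ in the bound comes from the initialization mismatch: the true backward process started from $m^N_T$ versus the simulated one started from $m^* = \mathcal{N}(0,\tfrac12 I_d)$; since $g(T)$ is large this is the analogue of $e^{-2\rho T}$ from Theorem \ref{thm:continuous}, estimated via the exponential contraction of the forward OU semigroup toward $m^*$.

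The heart of the argument is the first bracket, $G(\mu^h_{T-\delta}) - G(\mu^N_{T-\delta})$, which contains both the score-matching error and the genuinely new time-discretization error. Here I would introduce the flow $u(t,x) := \mathbb{E}[\varphi(\overleftarrow{X}^{?}_{T-\delta}) \mid \overleftarrow{X}^{?}_t = x]$ associated with the \emph{exact} backward dynamics, so that $\langle\varphi,\mu^\cdot_{T-\delta}\rangle = \langle u(0,\cdot), \mu^\cdot_0\rangle$, and then apply It\^o's formula to $t\mapsto u(t,\overleftarrow{X}^h_t)$ along the \emph{discretized} trajectory. Because $u$ solves the backward Kolmogorov PDE for the exact dynamics, the drift terms telescope and what remains is an integral of the mismatch between the exact drift $\dot g(T-t)(-x + \nabla\log p^N_{T-t}(x))$ and the frozen, approximate drift $\dot g(T-t)(-x + S^*_{T-t_k}(x_{t_k}))$, tested against $\nabla u(t,\overleftarrow{X}^h_t)$. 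I would split this mismatch into (i) the score error $\nabla\log p^N_{T-t_k} - S^*_{T-t_k}$, which after Cauchy--Schwarz and Assumption \ref{assum:score_discrete} produces $C_T\|\nabla\varphi\|^2_\infty\epsilon^2$ (using a gradient bound $\|\nabla u(t,\cdot)\|_\infty \lesssim \|\nabla\varphi\|_\infty$ that follows from Lipschitzness of the flow, itself a consequence of the $L$-Lipschitz score and Gr\"onwall); (ii) the freezing error $\nabla\log p^N_{T-t}(\overleftarrow{X}^h_t) - \nabla\log p^N_{T-t_k}(\overleftarrow{X}^h_{t_k})$ over one step $[t_k,t_{k+1})$. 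For (ii), a one-step analysis in both time and space, combined with the key observation that $\mathbb{E}|\nabla\log p^N_{T-t_k}(\overleftarrow{X}^h_{t_k})|^2$ is tied to the Fisher information $\mathcal{I}(m^N_{T-t_k}|m^*)$ which is decreasing (by de Bruijn / data processing for the OU flow) and hence bounded by $\mathcal{I}(m^N_\delta|m^*)$, yields the final sum $C_T\|\nabla\varphi\|^2_\infty \mathcal{I}(m^N_\delta|m^*) h \sum_k \int_{t_k}^{t_{k+1}}|\dot g(T-t)|^2\,dt\, e^{2(\delta - g(T-t_{k+1}))}$; the exponential weight $e^{2(\delta - g(T-t_{k+1}))}$ appears because the flow gradient $\|\nabla u(t,\cdot)\|_\infty$ actually \emph{contracts} backward in time at rate governed by $g$, which tightens the early steps. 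The stray term $\dot g(T)e^{2(\delta-g(T))}$ comes from the boundary contribution of this same It\^o expansion at $t=0$ (the first step, where the time-gap from $T$ is largest).

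The main obstacle I anticipate is step (ii): obtaining the freezing-error estimate with the \emph{correct} exponentially-decaying weight rather than a crude uniform bound, which is exactly what makes the scheduler optimization in the next section meaningful. This requires (a) a sharp space-Lipschitz estimate for $\nabla\log p^N_{s}$ as a function of $s$ near the invariant measure — i.e. controlling $\partial_s \nabla\log p^N_s$ and $\nabla^2\log p^N_s$ uniformly on $[\delta,T]$, using the explicit Gaussian-mixture form of $m^N_s$ as in the Remark after Assumption \ref{assum:S}; (b) a matching decay estimate $\|\nabla u(t,\cdot)\|_\infty \lesssim \|\nabla\varphi\|_\infty e^{-(g(T-t)-\delta)}$ for the flow of the exact backward SDE, which I would get by differentiating the SDE in the initial condition and applying Gr\"onwall with the \emph{dissipativity} $-\dot g(T-t)$ of the linear part dominating the at-most-$L$ growth from the score (this is where convexity of $g$, and the structural role of $\alpha=1$ flagged in the Remark, enter). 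Carefully bookkeeping how these two decays combine across the telescoped sum, and checking that the Fisher information $\mathcal{I}(m^N_\delta|m^*)$ is the right finite constant to factor out (finite because $\delta>0$, depending on $\delta$ and the data diameter), is the delicate part; everything else is Gr\"onwall, Cauchy--Schwarz, and It\^o.
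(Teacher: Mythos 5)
Your high-level architecture is close to the paper's: a telescoping decomposition of the error, a Feynman--Kac/Kolmogorov function $u$ whose PDE cancels against It\^o's formula, leaving an integral of the drift mismatch $\nabla\log p^N_{T-t} - S^*_{T-t_k}$ tested against $\nabla u$, then Cauchy--Schwarz and Assumption~\ref{assum:score_discrete} for the $\epsilon^2$ term. The paper actually does the ``dual'' of your step: it defines the functions $\{u^{(k)},v^{(k)}\}$ via the \emph{discretized, frozen} dynamics (so each one-step PDE has an explicit OU structure, yielding Gaussian Feynman--Kac formulas) and applies It\^o along the \emph{exact} backward trajectory $\overleftarrow{X}_t$ driven by $\nabla\log p^N$; you define $u$ via the exact backward dynamics and integrate along the discretized trajectory. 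Both routes kill the drift and leave the same mismatch, but the paper's choice gives explicit closed-form Gaussians for each one-step flow, which makes the gradient bound of Lemma~\ref{lem:estimate uk} elementary; your choice would require controlling the derivative of the exact backward flow (score $\nabla\log p^N$), which is less explicit and depends on the data diameter.

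The genuine gap is in your step (ii). You claim the exponential weight $e^{2(\delta - g(T-t_{k+1}))}$ arises ``because the flow gradient $\|\nabla u(t,\cdot)\|_\infty$ actually contracts backward in time at rate governed by $g$.'' That is not true and not how the paper proceeds: the paper's Lemma~\ref{lem:estimate uk} shows
\[
\|\nabla v^{(k)}\|_\infty \le \|\nabla v^{(k+1)}\|_\infty\Big(e^{-g(T-t_k)+g(T-t_{k+1})} + \big(1-e^{-g(T-t_k)+g(T-t_{k+1})}\big)\|S^*\|_{\mathrm{Lip}}\Big),
\]
so if $\|S^*\|_{\mathrm{Lip}}>1$ the gradient \emph{grows} going backward, and the paper simply replaces $\|\nabla u^{(k)}\|_\infty$ by the uniform worst-case bound $\|\nabla\varphi\|_\infty e^{(1+\|S^*\|_{\mathrm{Lip}})g(T)}$. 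The $-\dot g(T-t)x$ dissipation does not dominate the score's Lipschitz contribution in general, so the Gr\"onwall-with-dissipativity estimate you sketch would at best give a growing, not decaying, bound. The exponential weight in the theorem actually comes from a completely different source: (a) the one-step freezing error is bounded not by raw Lipschitz estimates on $\nabla\log p^N$ but by the increment of Fisher information, i.e.
\[
\mathbb{E}\Big[\int_{t_k}^{t_{k+1}}\dot g(T-t)\big|\nabla\log p^N_{T-t}(\overleftarrow{X}_t)-\nabla\log p^N_{T-t_k}(\overleftarrow{X}_{t_k})\big|^2\,\d t\Big]\lesssim \int_{t_k}^{t_{k+1}}\dot g(T-t)\big(\mathcal{I}(m^N_{T-t}\,|\,m^*)-\mathcal{I}(m^N_{T-t_k}\,|\,m^*)\big)\,\d t,
\]
quoted from \cite{conforti2025diffusion}; (b) the contraction inequality $\mathcal{I}(m^N_{T-t}|m^*)\le e^{2(g(T-t)-g(T-s))}\mathcal{I}(m^N_{T-s}|m^*)$ for $t<s$ (de Bruijn / Bakry--\'Emery for the scheduled OU); and (c) a summation-by-parts (Abel) step that converts the telescoping Fisher-information differences into the weighted sum $\sum_k\int_{t_k}^{t_{k+1}}|\dot g(T-t)|^2\,\d t\,e^{2(\delta - g(T-t_{k+1}))}$, with $\mathcal{I}(m^N_\delta|m^*)$ factored out as the dominant term. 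It is also this step that produces the seemingly ``stray'' boundary term $\dot g(T)e^{2(\delta-g(T))}$, not the $t=0$ boundary of the It\^o expansion as you suggest. Without the Fisher-information one-step estimate and the Abel summation, your approach would not reproduce the precise form of the discretization bound, and the claimed decay of $\|\nabla u\|$ is incorrect.

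Two minor points: your second bracket $G(\mu^N_{T-\delta})-G(m^N_\delta)$ vanishes identically since $\mu^N_{T-\delta}=m^N_\delta$ by exactness of time reversal, so it does not contribute an early-stopping term; and the initialization error $e^{-g(T)}$ is obtained in the paper by Pinsker's inequality applied to $\langle v^{(0)},m^*\rangle - \langle v^{(0)},m^N_T\rangle$ (using $\|v^{(0)}\|_\infty\le\|\varphi\|_\infty$), not at time $0$ against $\varphi$ directly, though for a linear functional both routes are available.
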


In order to well control the discretization error, we are interested in solving the minimization problem:
 \[\inf_{g(\delta)=\delta, g(T)= T'}\sum_{k=0}^{K-1}  \int_{t_k}^{t_{k+1}}|\dot g(T-t)|^2 \d t e^{-2g(T-t_{k+1})}. \]
Approximately, we may solve the following continuous-time optimal control problem. 
\begin{corollary}
\label{cor:ft_opt_pb}
    The minimization problem
    \[\inf_{g(0)=0, g(T)= T'}\int_0^T |\dot g(t)|^2 e^{-2 g(t)}dt  \]
    is solved by the following optimal scheduling:
   \[g^*(t) := - \ln\left(1-t\frac{1-e^{-T'}}{T}\right).\]
\end{corollary}
\begin{remark}
   Note that the minimized functional in Corollary~\ref{cor:ft_opt_pb} differs from the discretization error bound in Theorem~\ref{thm:discrete} by the quantity  
\begin{equation*}
\sum_{k=0}^{K-1}  \int_{t_k}^{t_{k+1}}|\dot g_{T-t}|^2 \left( e^{-2g(T-t_{k+1})} - e^{-2g(t)} \right)\, \mathrm{d}t \\ + \dot g(T) e^{2(\delta - g(T))}.
\end{equation*} 
We first observe that the first term is of order \( O(h) \). Moreover, evaluating the second term at the choice \( g = g^* \) gives  
\[
\dot g^*(T) e^{-2g^*(T)} = \frac{1}{T} \left( e^{-T'} - e^{-2T'} \right) = O(e^{-T'}).
\]  
Consequently, the minimal discretization error bound can differ from the bound obtained with \( g = g^* \) by at most \( O(h) + O(e^{-T'}) \), which is negligible.
\end{remark}

\section{Analog in finite-state generative model setting}

Recently, the finite-state generative model driven by the time-reversal Markov jump process has been earning an increasing interest. In this section, we briefly explain how our method can adapt to this context. Since it is not within the main scope of the paper, we will only focus on the continuous time model (without time discretization error) for an oversimplified one-token model. 

Consider an example where the token can take values from $\mathcal{K}:=\{0, 1\}$. As in the diffusion model, we initialize the forward process at the distribution $m_0\in \R^2 $ on $\mathcal{K}$. Eventually we shall sample according to $m_0$, using a time-reversal process. Let the forward process be a jump process with jump rate $\lambda$ and transition kernel $\tau(i,j) = \frac{1}{2} $ for $i,j\in \mathcal{K}$. In this setting, the marginal distribution $(m_t)_t$ evolves as the solution to the equation:
\[\frac{d}{dt} m_t(i) = \lambda \big( \frac12 -m_t(i) \big), \quad\mbox{for all $i\in \mathcal{K}$}. \]
Therefore $m_t(i) =  e^{-\lambda t} m_0(i) + (1-e^{-\lambda t}) \frac12$, and the marginal $m_t$ exponentially converges to the invariant measure, the uniform distribution on $\mathcal{K}$, denoted by $m^*$. As studied in \cite{discretediffusion}, by denoting the score function $s_t(i) := \frac{m_t(i) - m_t(1-i)}{m_t(i)}$, the time-reversal jump process is of the jump rate $\overleftarrow{\lambda}_t(i):= \frac{\lambda}{2} (2-s_t(i))$ and of the transition kernel $\overleftarrow{\tau}_t(i,1-i):=\frac{\lambda}{2}\frac{1-s_t(i)}{\overleftarrow{\lambda}_t(i)}$ and $\overleftarrow{\tau}_t(i,i):=\frac{\lambda}{2\overleftarrow{\lambda}_t(i)}$. Let the time-reversal jump process be initialized by the distribution $m_T$. One can verify that its marginal distribution is $\mu_t = m_{T-t}$. The dynamics of $(\mu_t)_t$ reads
\begin{equation*}
    \frac{d}{dt}\mu_t(i) = 
     \frac{\lambda}{2}\Big(\big(1-s_{T-t}(1-i)\big)\mu_t(1-i) 
     - \big(1-s_{T-t}(i)\big)\mu_t(i)\Big).
\end{equation*}
In practice, due to limited access to data, one may initialize the forward process by a distribution $m^N_0$ instead of $m_0$. We will denote by $(m^N_t)_t$ the marginal distribution of the associated forward jump process, and by $(\mu^N_t)_t$ that of the time-reversal process. On the other hand, one may use the data to estimate the score function $s_t$ by a parametrized function $s^*$ and try to control the error so that
\begin{equation}\label{eq:scorematching discrete}
    \sum_{i=0,1}\int_0^T |s_t(i) - s^*_t(i)|^2 m_t^N(i)dt <\epsilon^2.
\end{equation}
Denote by $(\mu^*_t)_t$ the marginal distribution of the time-reversal jump process driven by the function $s^*$ and starting from $m^*$ so that $\mu^*_0=m^*$ and 
\begin{equation}\label{eq:backjump}
    \frac{d}{dt}\mu^*_t(i) = 
     \frac{\lambda}{2}\Big(\big(1-s^*_{T-t}(1-i)\big)\mu^*_t(1-i) \\
     - \big(1-s^*_{T-t}(i)\big)\mu^*_t(i)\Big).
\end{equation}
Now, as in our result for the diffusion model, we aim at measuring the error of the generative model through a smooth test function $G:\R^2 \to \R$ by estimating
\(\big|G(m_0) - G(\mu^*_T) \big|\). 

Similar to the diffusion case, we again rely on the function $U(t,m) := G(\mu_{T}^{*, t,m})$ where $\mu_{T}^{*, t,m}$ is the terminal value at time $T$ of the solution to the equation \eqref{eq:backjump}  given the initial condition $\mu_t:=m$. We have $G(m^N_0) = U(T, m^N_0)=U(T, \mu^N_T)$,  $G(\mu^*_T) = U(0, m^*) $, and more importantly that $U(t, x_0, x_1)$ solves the backward Komologrov equation
\begin{equation}\label{eq:jumpPDE}
    \partial_t U + \frac{\lambda}{2}\sum_{i=0,1}\partial_{x_i} U\Big( \big(1-s^*_{T-t}(1-i)\big)x_{1-i}\\
- \big(1-s^*_{T-t}(i)\big)x_{i}\Big)=0.
\end{equation}
Now note that 
\begin{equation*}
    G(m_0) - G(\mu^*_T) 
= G(m_0) -G(m_0^N)\\ + U(T,\mu^N_T) - U(0, \mu^N_0) + U(0,m^N_T) - U(0,m^*).
\end{equation*} 
While it is easy to obtain 
$\E|G(m_0) -G(m_0^N)|\lesssim \frac{1}{\sqrt N}$ and $\quad |U(0,m^N_T) - U(0,m^*)|\lesssim e^{-\lambda T}$, the estimate of the middle term relies on the PDE \eqref{eq:jumpPDE}. Provided that $\partial_{x_i} U$ is bounded, we can verify that 
\begin{multline*}
   |U(T,\mu^N_T) - U(0, \mu^N_0)| = \\
   \frac{\lambda}{2} \Big| \int_0^T \sum_{i=0,1} \partial_{x_i}U(t,\mu^N_t)
   \Big(\big(s_{T-t}(1-i)-s^*_{T-t}(1-i)\big)\mu^N_t(1-i)
- \big(s_{T-t}(i)-s^*_{T-t}(i)\big)\mu^N_t(i)\Big)dt \Big| \lesssim \epsilon T,
\end{multline*}
where the last inequality is due to assumption on the score matching error \eqref{eq:scorematching discrete}. Therefore, the conclusion reads
\[\E\big|G(m_0) - G(\mu^*_T) \big|\lesssim \frac{1}{\sqrt N} + e^{-\lambda T} + \epsilon T.\]

\begin{remark}
The literature on error estimation for discrete-state diffusion models remains relatively sparse.
We note in particular the following several works that contribute to the understanding of discrete space diffusion models.
\cite{Su2025TheorAnalDFM} derive bounds on TV distance between target and generated distibution in terms of statistical error in the framework of discrete flow matching. \cite{Huang2025Complexity} develop the complexity theory for (masked) discrete diffusion providing bounds on the number of discrete score evaluators necessary to achieve the desired TV accuracy. \cite{Wang2025d2ImprovedTechniques} introduce training techniques for reasoning diffusion models that allow for estimating the KL divergence accuracy in an analytically tractable manner. \cite{Wan2025ErrorAnalysisDiscreteFlow} develop a theoretical framework for deriving non-asymptotic KL divergence and TV error bounds for discrete flows with generator matching. \cite{Kumar2025ScaleWiseVAR} show that the visual autoregressive generation is mathematically a discrete diffusion model allowing to export diffusion-related tools to VAR.

The works that are closest to our approach are \cite{discretediffusion, LiCai2025diffusionlanguagemodels}, quantifying
the error via the Kullback--Leibler (KL) divergence. Note, however, that when the state space \(K\) is
large and the token vocabulary is rich, then the empirical or model distribution \(m_0\) is typically
highly sparse. In that regime, the KL divergence is often ill-posed (infinite or numerically
unstable due to zero coordinates) unless ad hoc smoothing is introduced. Measuring generalization
error through a test regular function, as above, avoids this issue. 
\end{remark}

\section{Discussion on the optimal scheduler}

We give in  Corollary \ref{cor:ft_opt_pb} the scheduler $g^*$ to minimizer the discretization error upper bound obtained in Theorem \ref{thm:discrete}. Here we compute a Gaussian example, to illustrate that the scheduler is indeed often optimal. 

Suppose the target distribution is N$(\mu,\sigma^2)$, that is $m_0=\text{N}(\mu,\Sigma)$. Then, the dynamic
$$
\d X_t= - \dot g(t)X_t+\sqrt{\dot g(t)}\d W_t,
$$
will have the distribution as
$$
m_t=\text{N}\bigg(e^{-g(t)}\mu,\frac12+e^{-2g(t)}\Big(\sigma^2-\frac12\Big)\bigg).
$$
So we can compute $\nabla\log m_t(x)$ by
$$
\nabla\log m_t(x)=-\frac{2(x-e^{-g(t)}\mu)}{1+e^{-2g(t)}(2\sigma^2-1)}.
$$
Therefore, assuming zero estimation error of the score function, we get the reverse discrete dynamic:
\begin{equation*}
\d \overleftarrow{X}_t^h=
\dot g(T-t)\bigg(-\overleftarrow{X}_t^h-\frac{2(\overleftarrow{X}_{t_k}^h-e^{-g(T-t_k)}\mu)}{1+e^{-2g(T-t_k)}(2\sigma^2-1)}+2\overleftarrow{X}_{t_k}^h\bigg)\d t +\sqrt{\dot g(T-t)}\d \overleftarrow{W}_t
\end{equation*}
for $t\in [t_k,t_{k+1})$. Since inside each time interval, it is an OU process, we can explicitly write out the distribution of $\overleftarrow{X}_t^h$ given that Law($\overleftarrow{X}_0^h$)=N$(0,\frac12 I_d)$.


In order to see the discretization error upper bound obtained in Theorem \ref{thm:discrete} c
an be sharp, we take  $\sigma^2=\frac12$. In this case,  we have

\begin{multline}
\label{eq:gaussian_iteration} 
\E\big[\overleftarrow{X}_{t_{k}}^h\big]
=\E\big[\overleftarrow{X}_{0}^h\big]e^{-g(T)+g(T-t_{k})} +2\mu\sum_{\ell=1}^ke^{-g(T-t_\ell)+g(T-t_{k})}
\Big(e^{-g(T-t_{\ell-1})}-e^{-2g(T-t_{\ell-1})+g(T-t_{\ell})}\Big)
\\
=2\mu e^{g(T-t_k)} \bigg(-\sum_{\ell=1}^ke^{-2g(T-t_{\ell-1})}+\sum_{\ell=1}^ke^{-g(T-t_{\ell-1})-g(T-t_\ell)}\bigg)
\\
\approx 2\mu e^{g(T-t_k)}\sum_{\ell=1}^k e^{-2g(T-t_{\ell-1})}\dot g(T-t_{\ell-1}) h.
\end{multline}
Note that $\int_0^T e^{-2g(t)}2\dot g(t)dt =1-e^{-2g(T)} \approx 1 $. 
Therefore, the discretization error reads
\begin{multline*}
\mu- \E\big[\overleftarrow{X}_{T}^h\big]
    \approx \mu\sum_{\ell=1}^K\int_{t_{\ell-1}}^{t_\ell} \big( e^{-2g(T-t)} - e^{-2g(T-t_{\ell-1})} \big)2\dot g(T-t_{\ell-1})dt \\
\approx 4\mu h^2\sum_{\ell=1}^K  e^{-2g(T-t_{\ell-1})}|\dot g(T-t_{\ell-1})|^2, 
\end{multline*}
where the right hand side is  approximately the value we aim to minimize in Corollary \ref{cor:ft_opt_pb}. In this sense, we claim  the  scheduler in Corollary \ref{cor:ft_opt_pb} can be optimal. 

In the following we numerically evaluate the expectation $\E\big[\overleftarrow{X}_{T}^h\big]$ by \eqref{eq:gaussian_iteration} according to several common schedulers. Let $T=1$, we shall try:
\begin{itemize}
    \item  The linear scheduler $g(t) = T' t$;
    \item  The optimal scheduler prescribed in Corollary \ref{cor:ft_opt_pb} $g(t) = -\ln (1-t(1-e^{-T'}))$;
    \item The commonly used cosine scheduler $g(t) = -\ln(\cos(\frac{\pi}{2}at))$ where $a$ is such that $\cos(\frac{\pi}{2}a) =e^{-T'}$.
\end{itemize}
We  compare the errors on the computed expectation values for different values of $\sigma$.


\begin{figure*}[htbp]
\centering
  \subfloat
  {\includegraphics[width=0.23\textwidth]{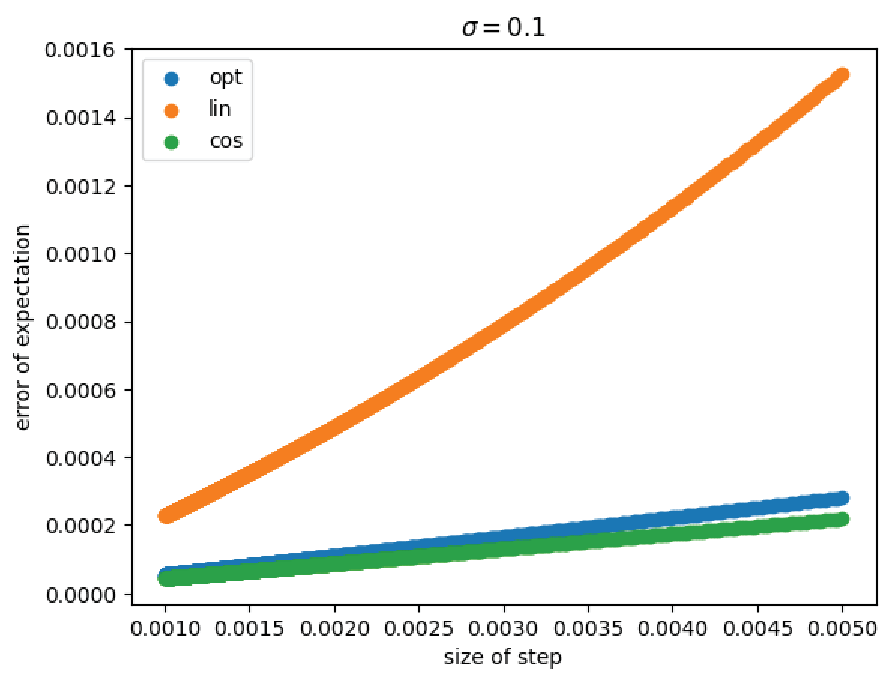}}
  \hfill
  \subfloat
  {\includegraphics[width=0.23\textwidth]{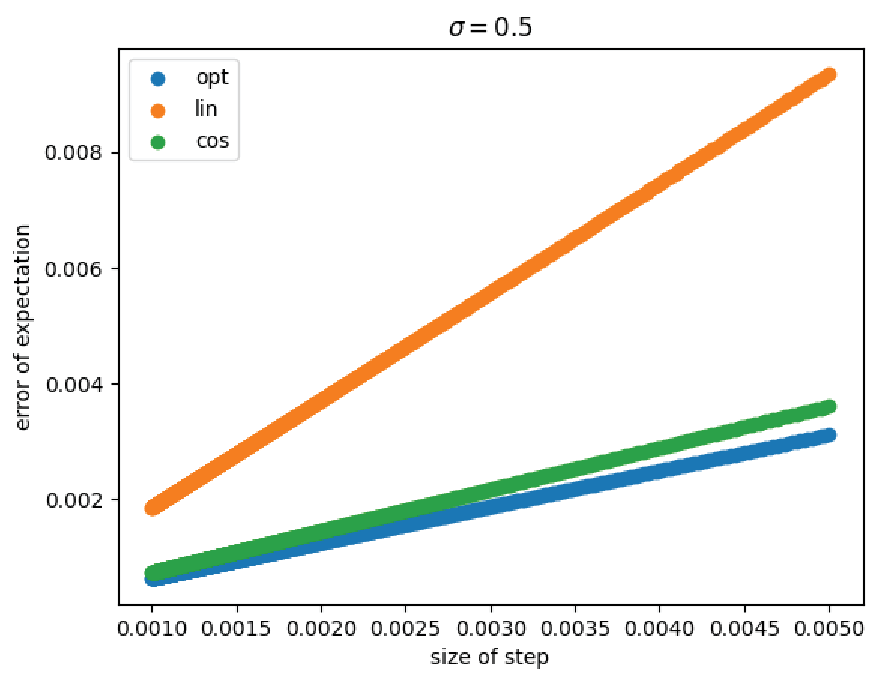}}
  \hfill
  \subfloat
  {\includegraphics[width=0.23\textwidth]{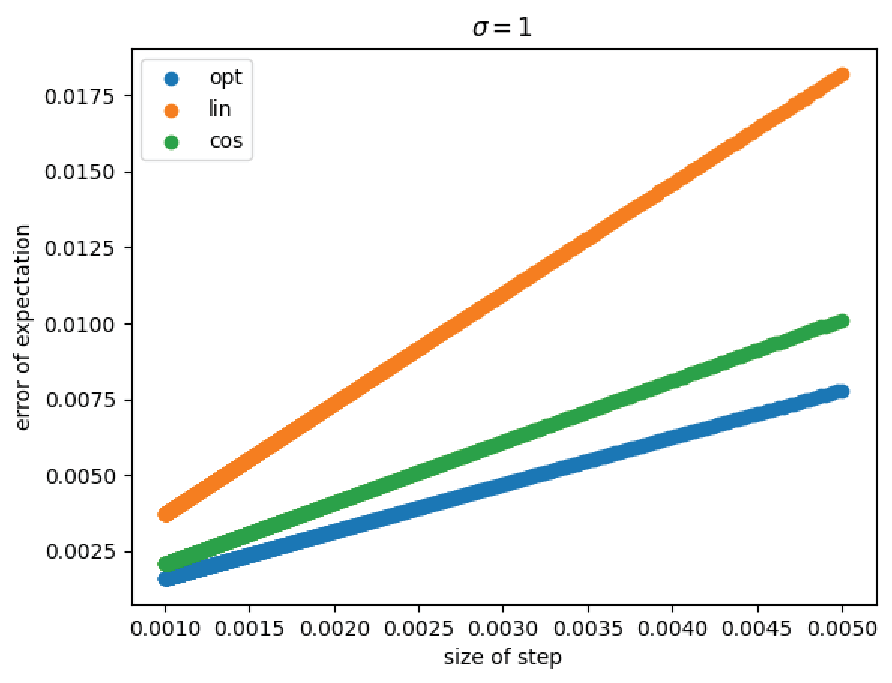}}
  \hfill
  \subfloat
  {\includegraphics[width=0.23\textwidth]{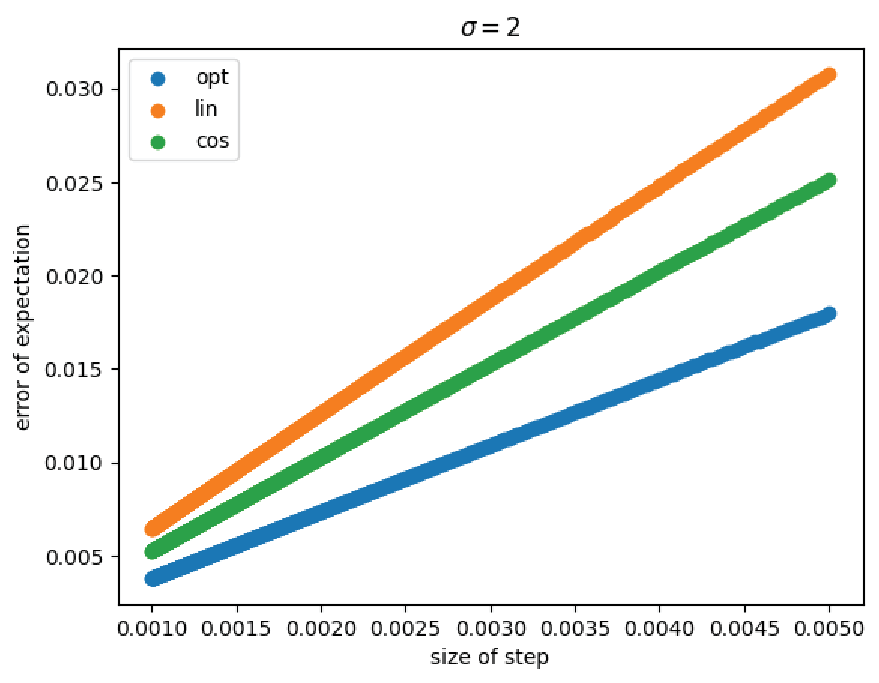}}
  \caption{Error on expectation with different variances and schedulers}
  \label{fig:error}
\end{figure*}




As shown in Figure \ref{fig:error}, the prescribed optimal scheduler stays optimal or almost optimal in different cases.

\section{Related works}

\paragraph{State-of-the-art results} There exists an extensive literature on the convergence of denoising diffusion models. While early works such as \cite{debortoli2021diffschrodingerbridge,Pidstrigach2022-se,Block2020-ty,debortoli2023convergencediffusion} and more recent works such as \cite{Choi2025-zt} proved convergence results with exponential dependencies in the problem parameters, those results have been refined in subsequent analyses. 

Early works proved error bounds with polynomial dependencies with respect to the dimension or linear dependency with respect to the dimension but under strong assumptions on the regularity of the score function, see \cite{Li2023-pn,Li2024-vu,lee2023generaldistributions,chen2023diffmodels,Gupta2024-ob}. Those results were also extended to bound the probability flow ODE associated with diffusion models \cite{Chen2023-nj,Benton2023-mh}. 

The first bounds on the Kullback-Leibler divergence \emph{linear} with respect to the dimension were proven in \cite{benton2024nearlydlinearconvergencebounds,conforti2025diffusion} and do not require any smoothness condition on the score. 
The dependency with respect to $\varepsilon$ was mitigated in \cite{Li2024-em,Li2024-rn} to $O(d/\varepsilon)$.  To the best of our knowledge the best results obtained so far are of complexity $O(\min(d, d^{2/3}L^{1/3}, d^{1/3}L) \varepsilon^{-2/3})$ in \cite{Jiao2024-si} leveraging a relaxed Lipschitz constant. It is also possible to improve those results by assuming that the data distribution is low dimensional \cite{potaptchik2025linearconvergencediffusionmodels,Liang2025-nn}. 
To the best of our knowledge, the result most closely related to our contribution is \cite{li2025dimensionfreeconvergencediffusionmodels} which identify dimension-free convergence rates for diffusion models in the case of Gaussian Mixture Models. 

Those state-of-the-art results can even be improved with a dependency of $O(d^{1+2/K}\varepsilon^{-1/K})$ upon using higher order samplers \cite{Li2025-wm} and $O(d^{5/4}\varepsilon^{1/2})$ using accelerated samplers, see also \cite{Pedrotti2023-zj}.  We also highlight that it is possible to improve those convergence rates even further, even deriving sub-linear time complexity results, by considering parallel sampling strategies as in \cite{Gupta2024-ob,debortoli2025accelerateddiffusionmodelsspeculative}.
Those results have been partially extended to the case of stochastic interpolants \cite{albergo2023stochasticinterpolants} in \cite{liu2025finitetimeanalysisdiscretetimestochastic} (with sub-optimal rates). 

\paragraph{Convergence in other modes.} The convergence of the diffusion models regarding other metrics has been investigated in several papers. For instance in \cite{mbacke2024noteconvergencedenoisingdiffusion,Cheng_2024,Gao2023-pc,Gao2025-ni,Yu2025-fz} show the convergence of diffusion models with respect to the Wasserstein distance of order $2$. Contrary to \cite{debortoli2023convergencediffusion} their results is not exponential in some parameters of the diffusion models. Other results on the convergence of diffusion models in Wasserstein distance including \cite{lee2022polynomialcomplexity} which improves significantly improves upon \cite{debortoli2023convergencediffusion}. However it relies on strong assumptions regarding the Lipschitz constant of the denoiser. While \cite{Reeves2025-fk} draws connection with information theory their analysis does not provide insights with respect to the discretization of the reverse process and its convergence.  

\paragraph{Other theoretical analyses.} Finally, we highlight a few other areas of theoretical exploration of diffusion models. \cite{LiCai2025diffusionlanguagemodels} explored the convergence of discrete categorical diffusion models, while \cite{Biroli2024-wg} characterized different modes of the backward diffusion leveraging tools from statistical physics.  \cite{chen2023scoreapproximationestimationdistribution} provides sample complexity bounds for distribution estimation in low dimensional settings. Finally, \cite{buchanan2025edgememorizationdiffusionmodels} identified memorization and generalization conditions for diffusion models. \cite{Chen2024-ru,Gatmiry2024-ta} focus on the \emph{learning} problem in diffusion models and provide result for general Gaussian mixtures. Combining learning guarantees with the aforementioned convergence results several works have derived near minimax optimality results \cite{Cai2025-ww,Oko2023-uv,azangulov2025convergencediffusionmodelsmanifold}.


\section{Proof of Theorem \ref{thm:continuous}}\label{proofofthm:cont}

Introduce 
\[U(t, m): = G\big({\rm Law}(\overleftarrow{X}^*_{T-\delta}|\overleftarrow{X}^*_t\sim m)\big).\]
Note that by the flow property we have
\[U(T - \delta, \mu^*_{T-\delta}) = U(0, \mu^*_0),\]
and on the other hand, the function $U$ satisfies  the backward Kolmogorov equation:
\[\partial_t U + \int\left( D_m U \big(-b + \frac{\alpha^2 +1}{2}S^*_{T-\cdot}\big) + \frac{\alpha^2}{2}\nabla\cdot D_m U \right) d m = 0.\]

\begin{lemma}[Regularity bound for $U$]\label{lem:regularity}
    Under Assumption  \ref{assum:S} (1) and \ref{assum:G}, we have the following regularity estimate on $U$:
    \begin{equation*}
        \begin{aligned}
            \big| D_m U(t, m, x) \big| \le C_T \|D_m G\|_\infty, \quad \Big| \frac{\delta U}{\delta m}(t, m, x) \Big| \le C \Big\|\frac{\delta G}{\delta m} \Big\|, \quad
            \Big| \frac{\delta^2 U}{\delta m^2}(t, m, x) \Big| \le C \Big\|\frac{\delta^2 G}{\delta m^2} \Big\|.
        \end{aligned}
    \end{equation*}
\end{lemma}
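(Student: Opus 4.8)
The plan is to exploit the fact that the conditional law ${\rm Law}\big(\overleftarrow{X}^*_{T-\delta}\mid\overleftarrow{X}^*_t\sim m\big)$ depends \emph{affinely} on the initial law $m$. Write $\overleftarrow{X}^{*,t,x}_{\cdot}$ for the solution of the backward SDE started from $x$ at time $t$, and let $P^*_{t,s}(x,\cdot):={\rm Law}\big(\overleftarrow{X}^{*,t,x}_s\big)$ be the associated Markov transition kernel, so that ${\rm Law}\big(\overleftarrow{X}^*_{T-\delta}\mid\overleftarrow{X}^*_t\sim m\big)=\Phi_t m$, where $\Phi_t m:=\int P^*_{t,T-\delta}(x,\cdot)\,m(\d x)$. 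Then $U(t,m)=G(\Phi_t m)$ is a composition of the affine map $\Phi_t$ and $G$, and from the defining integral identity for the linear derivative — using $\lambda\Phi_t m'+(1-\lambda)\Phi_t m=\Phi_t\big(\lambda m'+(1-\lambda)m\big)$ together with $(\Phi_t m'-\Phi_t m)(\d y)=\int P^*_{t,T-\delta}(x,\d y)(m'-m)(\d x)$ — I would read off the chain rule
\[\frac{\delta U}{\delta m}(t,m,x)=\int\frac{\delta G}{\delta m}(\Phi_t m,y)\,P^*_{t,T-\delta}(x,\d y),\qquad\frac{\delta^2 U}{\delta m^2}(t,m,x,x')=\iint\frac{\delta^2 G}{\delta m^2}(\Phi_t m,y,y')\,P^*_{t,T-\delta}(x,\d y)\,P^*_{t,T-\delta}(x',\d y').\]
Since $P^*_{t,T-\delta}(x,\cdot)$ and $P^*_{t,T-\delta}(x',\cdot)$ are probability measures, these two identities give at once $\big|\tfrac{\delta U}{\delta m}(t,m,x)\big|\le\big\|\tfrac{\delta G}{\delta m}\big\|_\infty$ and $\big|\tfrac{\delta^2 U}{\delta m^2}(t,m,x,x')\big|\le\big\|\tfrac{\delta^2 G}{\delta m^2}\big\|_\infty$, i.e. the second and third estimates (with $C=1$, and using only Assumption~\ref{assum:G}).

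For the intrinsic derivative I would use that, under Assumptions~\ref{assum:contraction} and~\ref{assum:S}(1), the backward drift $x\mapsto-b(x)+\tfrac{\alpha^2+1}{2}S^*_{T-t}(x)$ is Lipschitz in $x$ uniformly for $t\in[0,T-\delta]$, with constant $\ell:=\mathrm{Lip}(b)+\tfrac{\alpha^2+1}{2}L$. Coupling two copies of the backward SDE through the same Brownian path, the noise term cancels (because $\alpha$ is constant) and a Gronwall estimate gives, pathwise, $\big|\overleftarrow{X}^{*,t,x}_{T-\delta}-\overleftarrow{X}^{*,t,x'}_{T-\delta}\big|\le e^{\ell(T-\delta-t)}|x-x'|$. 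Combining this with the probabilistic representation $\tfrac{\delta U}{\delta m}(t,m,x)=\E\big[\tfrac{\delta G}{\delta m}\big(\Phi_t m,\overleftarrow{X}^{*,t,x}_{T-\delta}\big)\big]$ and the assumed boundedness of $D_mG=\nabla\tfrac{\delta G}{\delta m}$ shows that $x\mapsto\tfrac{\delta U}{\delta m}(t,m,x)$ is Lipschitz with constant $C_T\|D_mG\|_\infty$, $C_T:=e^{\ell T}$; hence its gradient $D_mU(t,m,x)=\nabla_x\tfrac{\delta U}{\delta m}(t,m,x)$ obeys $|D_mU(t,m,x)|\le C_T\|D_mG\|_\infty$, which is the first estimate. (When $S^*$ is smooth one may instead use the variational representation $D_mU(t,m,x)=\E\big[(\nabla_x\overleftarrow{X}^{*,t,x}_{T-\delta})^{\top}D_mG\big(\Phi_t m,\overleftarrow{X}^{*,t,x}_{T-\delta}\big)\big]$ and bound $\|\nabla_x\overleftarrow{X}^{*,t,x}_{T-\delta}\|_{\mathrm{op}}\le e^{\ell(T-\delta-t)}$ by Gronwall on the Jacobian flow.)

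The only genuine obstacle is regularity-theoretic: since $S^*$ is assumed merely Lipschitz, the stochastic flow $x\mapsto\overleftarrow{X}^{*,t,x}_s$ need not be classically differentiable, so the Jacobian representation is not directly available. This is precisely why I would run the argument at the level of Lipschitz constants as above: the one-sided Gronwall bound suffices, and it delivers the intrinsic-derivative estimate Lebesgue-almost everywhere (and everywhere in the weak sense in which the backward Kolmogorov equation for $U$ is used in the proof of Theorem~\ref{thm:continuous}); alternatively one mollifies $S^*$, derives the bounds for the smoothed flow with constants independent of the mollification parameter, and passes to the limit. The remaining points — the chain rule for linear derivatives under an affine inner map, and the exchange of $\nabla_x$ with the expectation — are routine given the uniform Lipschitz bound on the drift and the boundedness of $\tfrac{\delta G}{\delta m}$, $\tfrac{\delta^2 G}{\delta m^2}$ and $D_mG$ from Assumption~\ref{assum:G}.
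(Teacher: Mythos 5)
Your argument is correct, and for the linear-derivative bounds it takes a genuinely different — and cleaner — route than the paper. The key observation you exploit is that the backward dynamics driven by $S^*_{T-\cdot}$ is an ordinary (non–McKean--Vlasov) SDE once $m_0^N$ is fixed, so the flow on laws $m\mapsto\Phi_t m=\int P^*_{t,T-\delta}(x,\cdot)\,m(\d x)$ is affine; this makes the chain rule for linear derivatives one-line and yields
$\frac{\delta U}{\delta m}(t,m,x)=\int\frac{\delta G}{\delta m}(\Phi_t m,y)P^*_{t,T-\delta}(x,\d y)$
and its second-order analogue, with the bounds following immediately because $P^*_{t,T-\delta}(x,\cdot)$ is a probability measure — indeed with constant $C=1$, and using only Assumption~\ref{assum:G} for these two estimates. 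The paper instead obtains the same two bounds via a lifting to $\mathcal{P}(\R^{d+1})$ (augmenting the state with a frozen scalar coordinate $y$, setting $\widetilde G(\rho)=G(\mathcal{T}^*\rho)$ and $\widetilde U(t,\rho)=\widetilde G(\rho^*_{T-\delta-t}(\rho))$), which turns the linear derivatives of $U$ into intrinsic derivatives of $\widetilde U$ so that \cite[Lemma~5.1, Lemma~5.2]{10.1214/15-AOP1076} can be applied; this is more machinery for the same conclusion and, in particular, ties the linear-derivative bounds to Assumption~\ref{assum:S}(1) through that citation, whereas your route shows they do not require it. For $D_m U$ the two arguments essentially coincide: the paper cites the variational representation $D_m U(t,m,x)=\E\big[\partial_x\overleftarrow{X}^{*,t,x}_{T-\delta}\cdot D_m G(\cdot)\big]$ and bounds the Jacobian flow by Gronwall, which is exactly what you do at the level of Lipschitz constants; your remark about mollifying $S^*$ and passing to the limit is the right way to dispose of the classical-differentiability caveat, and gives the same constant $C_T=e^{(\mathrm{Lip}(b)+\frac{\alpha^2+1}{2}L)T}$.
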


\begin{lemma}\label{lem:UN}
Assume that $\varphi$ has
bounded derivatives $\frac{\delta \varphi}{\delta m}$ and $\frac{\delta^2 \varphi}{\delta m^2}$. Then, there exists a constant $C$, depending on the bounds on the first and second-order derivatives, such that, 
\begin{equation*}
    \E^{(x^i_0)_i\sim m_0^{\otimes N}}\Big[\big|\varphi(m_T)-\varphi(m_T^N)\big|^2\Big]\leq \frac{C}{N}.
\end{equation*}
\end{lemma}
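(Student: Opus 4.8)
The plan is to reduce the claim to a fluctuation estimate for $\varphi$ evaluated at an exact average of i.i.d.\ random probability measures, and then run a second-order linear-derivative expansion along a \emph{sequential} interpolation. First note that $m_T^N$ is literally such an average: writing $\kappa_i:=\mathrm{Law}(X_T\mid X_0=x^i_0)$, the random measures $\kappa_1,\dots,\kappa_N\in\mathcal{P}_2$ are i.i.d., their common mean (as measures) is $\E[\kappa_1]=m_T$, and $m_T^N=\frac{1}{N}\sum_{i=1}^N\kappa_i$. Put $\mathcal{F}_k:=\sigma(x^1_0,\dots,x^k_0)$ and $\zeta_k:=\kappa_k-m_T$, so $\zeta_k$ is centered, independent of $\mathcal{F}_{k-1}$, and $|\zeta_k|(\R^d)\le 2$. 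I would then interpolate along $\nu_0:=m_T$ and $\nu_k:=\nu_{k-1}+\frac{1}{N}\zeta_k=\frac{1}{N}\sum_{i\le k}\kappa_i+\frac{N-k}{N}m_T$ for $k=1,\dots,N$: each $\nu_k$ is a convex combination of elements of $\mathcal{P}_2$, hence lies in $\mathcal{P}_2$, and $\nu_N=m_T^N$, so that
\[
\varphi(m_T^N)-\varphi(m_T)=\sum_{k=1}^N\big(\varphi(\nu_k)-\varphi(\nu_{k-1})\big).
\]

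Next I would expand each increment to second order. Since $\nu_k-\nu_{k-1}=\frac{1}{N}\zeta_k$ and every measure on the segment $[\nu_{k-1},\nu_k]$ lies in $\mathcal{P}_2$, the defining formula for $\frac{\delta\varphi}{\delta m}$ followed by that for $\frac{\delta^2\varphi}{\delta m^2}$ (applied to $m\mapsto\frac{\delta\varphi}{\delta m}(m,y)$) gives
\[
\varphi(\nu_k)-\varphi(\nu_{k-1})=A_k+R_k,\qquad A_k:=\frac{1}{N}\int\frac{\delta\varphi}{\delta m}(\nu_{k-1},y)\,\zeta_k(dy),
\]
where $R_k$ is a double average of $\frac{\delta^2\varphi}{\delta m^2}$ — evaluated at measures of the form $\nu_{k-1}+\frac{rs}{N}\zeta_k$ — against $\zeta_k\otimes\zeta_k$, hence $|R_k|\le\frac{1}{2N^2}\|\frac{\delta^2\varphi}{\delta m^2}\|_\infty\,|\zeta_k|(\R^d)^2\le\frac{2}{N^2}\|\frac{\delta^2\varphi}{\delta m^2}\|_\infty$. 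Two structural observations do the work: $\nu_{k-1}$ is $\mathcal{F}_{k-1}$-measurable while $\zeta_k$ is centered and independent of $\mathcal{F}_{k-1}$, so $\E[A_k\mid\mathcal{F}_{k-1}]=\frac{1}{N}\int\frac{\delta\varphi}{\delta m}(\nu_{k-1},y)\,\E[\zeta_k](dy)=0$, i.e.\ $(A_k)$ is a martingale-difference sequence; and $|A_k|\le\frac{2}{N}\|\frac{\delta\varphi}{\delta m}\|_\infty$.

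The conclusion then follows from
\[
\E\big[|\varphi(m_T^N)-\varphi(m_T)|^2\big]\le 2\,\E\Big[\Big(\sum_{k=1}^N A_k\Big)^2\Big]+2\Big(\sum_{k=1}^N|R_k|\Big)^2,
\]
using orthogonality of the martingale differences, $\E[(\sum_k A_k)^2]=\sum_k\E[A_k^2]\le N\cdot\frac{4}{N^2}\|\frac{\delta\varphi}{\delta m}\|_\infty^2$, together with $(\sum_k|R_k|)^2\le(N\cdot\frac{2}{N^2}\|\frac{\delta^2\varphi}{\delta m^2}\|_\infty)^2$. Both terms are $O(1/N)$, giving the bound with $C$ depending only on $\|\frac{\delta\varphi}{\delta m}\|_\infty$ and $\|\frac{\delta^2\varphi}{\delta m^2}\|_\infty$.

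The delicate point — and the reason a \emph{second} linear derivative is needed — is the sample-dependence buried in the remainder: $R_k$ evaluates $\frac{\delta^2\varphi}{\delta m^2}$ at measures containing $\zeta_k$ (hence $x^k_0$) itself, so it cannot be decoupled by independence, and its total-variation pairing can only be controlled crudely, at cost $O(1)$ per increment. This would be fatal for a single Taylor step directly between $m_T$ and $m_T^N$; the sequential interpolation is exactly what rescues it, since feeding in one fresh i.i.d.\ sample at a time makes each path increment of size $1/N$ — hence each quadratic remainder of size $1/N^2$ — and keeps all randomness ``one step behind'' (i.e.\ $\mathcal{F}_{k-1}$-measurable), which is what turns the first-order terms into genuine martingale differences rather than merely small quantities. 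A direct expansion would instead force a modulus of continuity of $\frac{\delta^2\varphi}{\delta m^2}$ in the measure variable — effectively a third derivative — which is not assumed.
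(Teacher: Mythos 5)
Your proof is correct, and it takes a genuinely different route from the paper's. The paper performs a \emph{single} linear-derivative expansion of $\varphi$ from $m_T$ to $m_T^N$, splitting the increment into a first-order term $S_2$ and an integral remainder $S_1$; $S_2$ is handled by a direct i.i.d.\ variance computation, while $S_1$ is written as $\tfrac{1}{N}\sum_i \int_0^1 \varphi^i_\lambda\,d\lambda$ and the cross-terms $\E[\varphi^i_\lambda\varphi^j_\lambda]$ ($i\ne j$) are controlled by a leave-two-out decoupling: one replaces $m_T^N$ by $m_T^{N,-i,-j}$ to get conditional independence (hence zero mean), and then bounds the replacement error by $O(1/N)$ using $\frac{\delta^2\varphi}{\delta m^2}$. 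Your argument instead interpolates \emph{sequentially} along $\nu_k = \nu_{k-1} + \tfrac1N(\kappa_k - m_T)$, so that the first-order increments $A_k$ are automatically a martingale-difference sequence (each $\nu_{k-1}$ is $\mathcal F_{k-1}$-measurable and $\zeta_k$ is centered and independent of it), giving $\E[(\sum_k A_k)^2]=\sum_k\E[A_k^2]\le 4\|\tfrac{\delta\varphi}{\delta m}\|_\infty^2/N$ with no decoupling construction at all, while the quadratic remainders are each $O(1/N^2)$ in absolute value because the step size is $1/N$. The hypotheses used are identical (bounded first and second linear derivatives) and the rate is the same. What the martingale route buys is a cleaner, more mechanical handling of the cross-terms — orthogonality is structural rather than engineered — and a sharper $O(1/N^2)$ bound on the bias contribution, which is harmless but worth noticing. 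One small inaccuracy in your discussion: a \emph{direct} single-step expansion does not genuinely require a third derivative — the paper's leave-two-out trick shows how to get by with only the second — but it does require the more elaborate combinatorial decoupling that your sequential interpolation sidesteps.
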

The proof of Lemma \ref{lem:regularity} and \ref{lem:UN} is postponed to the Appendix.

\begin{proof}[Proof of Theorem \ref{thm:continuous}]
    The weak error between  $\mu^*_{T-\delta}$ and $\mu^N_{T-\delta} =m^N_\delta$ reads
\begin{equation*}
\begin{aligned}
        & G\big(\mu^*_{T-\delta}\big) - G\big(m^N_\delta\big)\\
     =&~~  U\big(T-\delta, \mu^*_{T-\delta}\big) - U\big(T-\delta, \mu^N_{T-\delta}\big)\\
    =&~~  U\big(0, m^*\big) - U\big(0, m^N_T\big)\\
    & \quad\quad\quad - \int_0^{T-\delta} 
    \int \left(\partial_t U + D_m U \big(-b +\frac{\alpha^2 +1}{2} \nabla\log m^N_{T-t}\big) + \frac{\alpha^2}{2}\nabla\cdot D_m U \right)\big(t, m^N_{T-t},  x\big)  m^N_{T-t}(dx)  dt\\
    =&~~ U\big(0, m^*\big) - U\big(0, m^N_T\big)\\
    & \quad\quad\quad - \frac{\alpha^2+1}{2}\int_0^{T-\delta} 
    \int \left( D_m U ( \nabla\log m^N_{T-t} - S^*_{T-t})\right)\big(t, m^N_{T-t},  x\big)  m^N_{T-t}(dx)  dt.
\end{aligned}
\end{equation*}
Therefore, by Pinsker's inequality and Assumption \ref{assum:contraction}, we have 
\begin{multline*}
    \mathbb{E} \left[ \left| G\big(\mu^*_{T-\delta}\big) - G\big(m^N_\delta\big) \right|^2 \right] \\  
\le C \mathbb{E} \left[ \left|U\big(0, m^*\big) - U\big(0, m^N_T\big) \right|^2 \right] + C_T \epsilon^2(\alpha^2 +1)^2 \|D_m U\|^2_\infty \\
\le C \left\|\frac{\delta U}{\delta m}\right\|^2_\infty H(m_T|m^*) + C \mathbb{E} \left[ \left|U\big(0, m_T\big) - U\big(0, m^N_T\big) \right|^2 \right] + C_T\|D_m U\|^2_\infty \epsilon^2(\alpha^2 +1)^2 \\
\le  C \left\|\frac{\delta U}{\delta m}\right\|^2_\infty e^{-2\rho T} H(m_0| m^*)  + C \mathbb{E} \left[ \left|U\big(0, m_T\big) - U\big(0, m^N_T\big) \right|^2 \right]+  C_T\|D_m U\|^2_\infty \epsilon^2 (\alpha^2 +1)^2,
\end{multline*} 
where constant $C$ may vary from line to line. 
Moreover, due to the boundedness of the linear derivative of $G$ and $U(0,\cdot)$ (Lemma \ref{lem:regularity}), by It\^o's formula and Lemma \ref{lem:UN} we obtain:
\begin{equation*}
    \begin{aligned}
        &\mathbb{E}\left[\left| G\big(m_0\big) - G\big(m_\delta\big)  \right|^2\right] 
        = \mathbb{E}\left[\left| \int_0^{\delta } \int_{\mathbb{R}^d}\Big(D_m G(m_t, x)b(x) + \frac12\nabla \cdot D_m G(m_t,x) \Big) m_t(dx)dt \right|^2\right]  \le C\delta^2,\\
        &\mathbb{E}\left[\left| G\big(m_\delta\big) - G\big(m^N_\delta\big)  \right|^2\right] \le \frac{C}{N}, 
\quad\quad \mathbb{E}\left[\left| U\big(0,m_T\big) -  U\big(0,m^N_T\big)\right|^2\right] \le \frac{C_T}{N}. 
    \end{aligned}
\end{equation*}
Finally, the desired estimate follows.
\end{proof}

\section{Proof of Theorem \ref{thm:discrete} }\label{sec:proofofthmdiscrete}

Let us define a family of functions $\{u^{(k)}(\cdot;x)\}_{\{k=0,1,2,..,K-1,x\in\R\}}$ and $\{v^{(k)}\}_{k=0,1,..,K}$ by
\begin{equation}\label{ukfeymankac}
    \text{For }t\in [t_k, t_{k+1}),\quad u^{(k)}(t,x;x_k)\coloneqq\mathbb{E}\Big[v^{(k+1)}(\overleftarrow{X}^h_{t_{k+1}})\Big|\overleftarrow{X}^h_t = x, \overleftarrow{X}^h_{t_k} =x_k \Big], 
\end{equation}
with
$$
v^{(K)}(x)\coloneqq\varphi(x),\text{ and }v^{(k)}(x)\coloneqq u^{(k)}(t_{k},x;x)\text{, for }k=0,1,2,..,K-1.
$$

\begin{lemma}\label{lem:estimate uk}
    Given the definition of $\{v^{(k)}\}_{k=0,1,..,K}$ and $\{u^{(k)}\}_{k=0,1,2,..,K-1}$ above, fix $k\in\{0,1,..,K-1\}$, for all $x\in\R^d$, and $t\in[t_k,t_{k+1})$ we have
    $$
    \|u^{(k)}(t,\cdot;x)\|_\infty\leq \|\varphi\|_\infty,\quad\|\nabla u^{(k)}(t,\cdot;x)\|_\infty\leq \|\nabla\varphi\|_\infty e^{(1+\|S^*\|_{\text{Lip},1,\infty})g(T)}.
    $$
    Moreover, for all $k\in\{0,1,..,K\}$
    $$
    \|v^{(k)}\|_\infty\leq \|\varphi\|_\infty,\quad\|\nabla v^{(k)}\|_\infty\leq \|\nabla\varphi\|_\infty e^{(1+\|S^*\|_{\text{Lip},1,\infty})T}.
    $$
\end{lemma}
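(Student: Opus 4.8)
The plan is to establish the $L^\infty$ bounds by a backward induction on $k$ from $K$ down to $0$, using the Feynman--Kac representation \eqref{ukfeymankac} for the sup bound and a differentiated version of it (together with a Grönwall-type argument over each sub-interval $[t_k,t_{k+1})$) for the gradient bound. For the sup bound: since $v^{(K)}=\varphi$ and each $u^{(k)}(t,\cdot;x)$ is a conditional expectation of $v^{(k+1)}(\overleftarrow{X}^h_{t_{k+1}})$, the contraction property of conditional expectation immediately gives $\|u^{(k)}(t,\cdot;x)\|_\infty\le\|v^{(k+1)}\|_\infty$, and since $v^{(k)}(x)=u^{(k)}(t_k,x;x)$ we get $\|v^{(k)}\|_\infty\le\|v^{(k+1)}\|_\infty$; chaining these down from $k=K$ yields $\|u^{(k)}(t,\cdot;x)\|_\infty\le\|\varphi\|_\infty$ and $\|v^{(k)}\|_\infty\le\|\varphi\|_\infty$ for all $k$.

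For the gradient bound, first I would note that on $[t_k,t_{k+1})$ the process $\overleftarrow{X}^h_t$ solves a linear SDE in which the drift is $\dot g(T-t)(-\overleftarrow{X}^h_t + S^*_{T-t_k}(\overleftarrow{X}^h_{t_k}))$; in particular $u^{(k)}(t,\cdot;x_k)$ solves the (linear, time-inhomogeneous) backward Kolmogorov PDE
\[
\partial_t u^{(k)} + \dot g(T-t)\bigl(-x + S^*_{T-t_k}(x_k)\bigr)\cdot\nabla u^{(k)} + \tfrac12\dot g(T-t)\,\Delta u^{(k)} = 0,
\]
with terminal data $u^{(k)}(t_{k+1},\cdot;x_k)=v^{(k+1)}$. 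Differentiating this PDE in $x$, the vector $w:=\nabla u^{(k)}$ satisfies a system whose only zeroth-order term comes from differentiating the $-x\cdot\nabla u^{(k)}$ term, contributing a factor $-\dot g(T-t)w$; hence along the characteristics $\|w(t,\cdot;x_k)\|_\infty$ grows by at most $\exp\bigl(\int_{t_k}^{t_{k+1}}\dot g(T-s)\,ds\bigr)$ relative to $\|\nabla v^{(k+1)}\|_\infty$. The Lipschitz-in-$x_k$ dependence of $S^*$ does not enter the gradient in $x$ within a single interval, but it does enter when one differentiates $v^{(k)}(x)=u^{(k)}(t_k,x;x)$ with respect to $x$ through \emph{both} slots: the chain rule gives $\nabla v^{(k)}(x)=\nabla_x u^{(k)}(t_k,x;x)+\nabla_{x_k}u^{(k)}(t_k,x;x_k)\big|_{x_k=x}$, and the second term is controlled by $\|\nabla S^*\|_\infty$ times the first-type bound. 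Collecting, each step from $k+1$ to $k$ multiplies the gradient bound by $\exp\bigl((1+\|S^*\|_{\mathrm{Lip},1,\infty})\int_{t_k}^{t_{k+1}}\dot g(T-s)\,ds\bigr)$; telescoping over $k=0,\dots,K-1$ and using $\sum_k\int_{t_k}^{t_{k+1}}\dot g(T-s)\,ds = g(T)-g(\delta)\le g(T)$ gives the stated $e^{(1+\|S^*\|_{\mathrm{Lip},1,\infty})g(T)}$ for $u^{(k)}$ and $e^{(1+\|S^*\|_{\mathrm{Lip},1,\infty})T}$ for $v^{(k)}$ (the cruder bound $g(T)\le$ the relevant constant absorbed, or simply $T$ as an upper proxy).

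The main obstacle I anticipate is making the differentiation-under-the-flow rigorous and bookkeeping the two distinct slots $x$ and $x_k$ of $u^{(k)}$ cleanly: one must justify that $u^{(k)}$ is $C^1$ in $x$ (which follows from Lipschitz regularity of $S^*$ and the smoothing of the Gaussian noise on $[t_k,t_{k+1})$, $\dot g>0$), and then carefully track that the only exponential growth factor in the gradient estimate is the $\dot g(T-t)$ coming from the linear $-x$ term plus the extra $\|\nabla S^*\|$ contribution that appears once per interval from the diagonal evaluation $x_k=x$. Everything else — the stochastic integral has zero-mean contribution to $\nabla u^{(k)}$, the Lipschitz constant notation $\|S^*\|_{\mathrm{Lip},1,\infty}$ — is routine once the slot structure is fixed. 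I would present the gradient estimate via the PDE/Grönwall route rather than direct pathwise differentiation of the SDE flow, as it keeps the constants transparent.
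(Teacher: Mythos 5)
Your plan matches the paper's proof in its essential structure: the sup bounds follow trivially from the Feynman--Kac representation \eqref{ukfeymankac}; the gradient bounds come from a per-interval estimate followed by a telescoping product; and the crucial observation, which you correctly identify, is that differentiating $v^{(k)}(x)=u^{(k)}(t_k,x;x)$ hits \emph{both} slots so that $\|S^*\|_{\text{Lip},1,\infty}$ enters once per step (weighted by $1-e^{-(g(T-t_k)-g(T-t_{k+1}))}$), while within a single interval only the linear $-x$ drift contributes to $\nabla_x u^{(k)}$. Where you diverge from the paper is the technical vehicle: you propose to differentiate the piecewise backward Kolmogorov PDE \eqref{eq:PDE piecewise} and run Gr\"onwall, whereas the paper exploits that on each $[t_k,t_{k+1})$ the conditional law of $\overleftarrow{X}^h_{t_{k+1}}$ given $(\overleftarrow{X}^h_t,\overleftarrow{X}^h_{t_k})$ is an explicit Gaussian, so $u^{(k)}(t,x;x_k)=\E\big[v^{(k+1)}(\sigma^{(k)}(t)Z+\mu^{(k)}(t,x;x_k))\big]$ can be differentiated pathwise through the mean $\mu^{(k)}$, giving the factors $e^{-g(T-t)+g(T-t_{k+1})}$ (for the $x$-slot) and $1-e^{-g(T-t_k)+g(T-t_{k+1})}$ (for the $x_k$-slot) by a clean chain rule with no regularity bookkeeping. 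The explicit-kernel route is shorter and sidesteps the $C^1$-justification you flag as an obstacle.

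One substantive point worth correcting in your sketch: when you differentiate the PDE, the zeroth-order term produced by $\partial_{x_i}\big(\dot g(T-t)(-x)\cdot\nabla u^{(k)}\big)$ has coefficient $-\dot g(T-t)$, which is \emph{negative}; Feynman--Kac then yields a \emph{decay} factor $e^{-(g(T-t)-g(T-t_{k+1}))}\le 1$ within the interval, not growth $e^{+\int\dot g}$. This matches the paper's explicit $e^{-g(T-t)+g(T-t_{k+1})}$ in $\mu^{(k)}$. Your $e^{+\int\dot g}$ is still a valid (merely looser) upper bound, so the final conclusion stands, but the intuition ``the linear term makes $\nabla u^{(k)}$ grow'' is backwards: the OU contraction actually helps, and the only genuine per-step inflation comes from the $\|S^*\|_{\text{Lip},1,\infty}$ contribution at the diagonal evaluation. (As a side remark, the paper itself contains a sign typo in the $\nabla u^{(k)}$ line, writing $e^{g(T-t)-g(T-t_{k+1})}$ rather than $e^{-g(T-t)+g(T-t_{k+1})}$, but this only loosens the constant; your bound and theirs both reduce to $e^{(1+\|S^*\|_{\text{Lip},1,\infty})g(T)}$ after telescoping.)
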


\noindent The proof is postponed to the Appendix.

Further, on $ [t_k, t_{k+1})$ the function $u^{(k)}(\cdot;x_k)$ solves the PDE:
\begin{equation}\label{eq:PDE piecewise}
    \begin{cases}\partial_t u^{(k)}(t,x;x_k) + \frac12 \dot g(T-t) \Delta u^{(k)}(t,x;x_k) +  \dot g(T-t) \Big(x  +  S^*_{T-{t_k}}(x_{k})\Big) \nabla u^{(k)}(t,x;x_k) = 0.
    \\
    u^{(k)}(t_{k+1},x;x_k)=v^{(k+1)}(x).
    \end{cases}
\end{equation}
Since 
\[v^{(k)}(x)=u^{(k)}(t_k,x;x) = \mathbb{E}\Big[v^{(k+1)}( \overleftarrow{X}^h_{t_{k+1}})\Big|\overleftarrow{X}^h_{t_k} =x\Big]=\E\Big[\varphi(\overleftarrow{X}^h_{T-\delta})\Big|\overleftarrow{X}^h_{t_k} =x\Big],\]
we get $v^{(0)}(x)=\mathbb{E}\big[\varphi(\overleftarrow{X}^h_{T-\delta})\big|\overleftarrow{X}^h_{0} =x \big]$, and $\varphi(x)=v^{(K)}(x)$. Therefore, we have 
\begin{equation*}
G(\mu^h_{T-\delta}) - G(m^N_\delta) = \mathbb{E}\Big[\varphi(\overleftarrow{X}^h_{T-\delta})\Big] - \langle \varphi, m_\delta^N\rangle 
    =\langle v^{(0)}, m^*\rangle - \langle v^{(K)}, m_\delta^N\rangle.
\end{equation*}
Now we are ready to prove the main result on the discrete-time model.

\begin{proof}[Proof of Theorem \ref{thm:discrete}]
Note that 
\[G(\mu^h_{T-\delta}) - G(m^N_\delta) =
\langle v^{(0)}, m^*\rangle -\langle v^{(0)}, m^N_T\rangle + \langle v^{(0)}, m^N_T\rangle - \langle v^{(K)}, m_\delta^N\rangle.\]
By Pinsker's inequality and Lemma \ref{lem:UN}, the difference  $\langle v^{(0)}, m^*\rangle-\langle v^{(0)}, m^N_T\rangle$ is controlled as follows:
\begin{align}\label{diff:mstarmT}
&\E^{(x^i_0)_i\sim m_0^{\otimes N}}\left[\Big|\langle v^{(0)}, m^*\rangle-\langle v^{(0)}, m^N_T\rangle\Big|^2\right]\\
&\lesssim\Big|\langle v^{(0)}, m^*\rangle-\langle v^{(0)}, m_T\rangle\Big|^2 + \E^{(x^i_0)_i\sim m_0^{\otimes N}}\left[\Big|\langle v^{(0)}, m_T\rangle-\langle v^{(0)}, m^N_T\rangle\Big|^2\right]\notag\\
&\leq \|v^{(0)}\|^2_\infty \text{TV}(m^*,m_T)^2 + \E^{(x^i_0)_i\sim m_0^{\otimes N}}\left[\Big|\langle v^{(0)}, m_T\rangle-\langle v^{(0)}, m^N_T\rangle\Big|^2\right]\notag\\
&\lesssim  \|v^{(0)}\|^2_\infty \Big(H(m_T|m^*) + \frac{1}{N}\Big) ~\lesssim~ \|\varphi\|^2_\infty \Big(e^{-2g(T)} + \frac{1}{N}\Big) .
\end{align}
To estimate the difference $\langle v^{(K)},m_\delta^N\rangle - \langle v^{(0)}, m^N_T\rangle$, it follows from  It\^o's formula  and the PDEs \eqref{eq:PDE piecewise} that 
\begin{equation*}
\begin{aligned}
\langle v^{(K)}&,m_\delta^N\rangle - \langle v^{(0)}, m^N_T\rangle=\E\Big[v^{(K)}\big(\overleftarrow{X}_{T-\delta}\big)-v^{(0)}\big(\overleftarrow{X}_{0}\big)\Big]=\sum_{k=0}^{K-1}\E\Big[v^{(k+1)}\big(\overleftarrow{X}_{t_{k+1}}\big)-v^{(k)}\big(\overleftarrow{X}_{t_k}\big)\Big]
\\
=&\sum_{k=0}^{K-1}\E\Big[u^{(k)}\big(t_{k+1},\overleftarrow{X}_{t_{k+1}};\overleftarrow{X}_{t_k}\big)-u^{(k)}\big(t_k,\overleftarrow{X}_{t_k};\overleftarrow{X}_{t_k}\big)\Big]
\\
\overset{\text{It\^o's}}{=}& \sum_{k=0}^{K-1} \mathbb{E}\bigg[\int_{t_k}^{t_{k+1}} \bigg(\partial_t u^{(k)}\big(t,\overleftarrow{X}_{t};\overleftarrow{X}_{t_k}\big)+\dot g(T-t))\Delta u^{(k)}\big(t,\overleftarrow{X}_{t};\overleftarrow{X}_{t_k}\big)
\\
&\qquad\qquad\qquad\qquad+\dot g(T-t) \Big(- \overleftarrow{X}_{t}+ \nabla \ln p^N_{T-t}\big(\overleftarrow{X}_{t}\big)\Big)  \nabla u^{(k)}\big(t,\overleftarrow{X}_{t};\overleftarrow{X}_{t_k}\big) \bigg)\d t\bigg]
\\
\overset{\eqref{eq:PDE piecewise}}{=}&\sum_{k=0}^{K-1} \mathbb{E}\bigg[\int_{t_k}^{t_{k+1}}\dot g(T-t) \Big( \nabla \ln p^N_{T-t}(\overleftarrow{X}_{t}) - S^*_{T-t_k}(\overleftarrow{X}_{t_k}) \Big) \nabla u^{(k)}\big(t,\overleftarrow{X}_{t};\overleftarrow{X}_{t_k}\big) \d t\bigg]
\end{aligned}
\end{equation*}
where $\overleftarrow{X}_0\sim m^N_T$. Therefore,
\begin{equation*}
    \begin{aligned}
        \Big(\langle v^{(K)}&,m_\delta^N\rangle - \langle v^{(0)}, m^N_T\rangle\Big)^2
        \\
        \leq &\;\E\Bigg[\bigg\{\sum_{k=0}^{K-1}\int_{t_k}^{t_{k+1}}\dot g(T-t) \Big( \nabla \ln p^N_{T-t}(\overleftarrow{X}_{t}) - S^*_{T-t_k}(\overleftarrow{X}_{t_k}) \Big) \nabla u^{(k)}\big(t,\overleftarrow{X}_{t};\overleftarrow{X}_{t_k}\big) \d t\bigg\}^2\Bigg]
        \\
        \leq &\;\E\Bigg[\bigg\{\sum_{k=0}^{K-1}\int_{t_k}^{t_{k+1}}\dot g(T-t) \Big\| \nabla \ln p^N_{T-t}(\overleftarrow{X}_{t}) - S^*_{T-t_k}(\overleftarrow{X}_{t_k}) \Big\|^2\d t\bigg\}
        \\
        &\qquad\qquad\cdot  \bigg\{\sum_{k=0}^{K-1}\int_{t_k}^{t_{k+1}}\dot g(T-t) \big\|\nabla u^{(k)}\big(t,\overleftarrow{X}_{t};\overleftarrow{X}_{t_k}\big)\big\|^2 \d t\bigg\}\Bigg].
    \end{aligned}
\end{equation*}

By \eqref{nablaubdd}, we have
\begin{equation}\label{eq:nabla u bound}
\begin{aligned}
\sum_{k=0}^{K-1}\int_{t_k}^{t_{k+1}}\dot g(T-t) \big\|\nabla u^{(k)}\big(t,\overleftarrow{X}_{t};\overleftarrow{X}_{t_k}\big)\big\|^2 \d t&\leq \big\|\nabla u^{(k)}\big\|_\infty^2\int_{t_0}^{t_{K}}\dot g(T-t)\d t
\\
&\leq g(T)\|\nabla\varphi\|^2_\infty e^{(2+2\|S^*\|_{\text{Lip}})g(T)} .
\end{aligned}
\end{equation}

In order to deal with the first term, using the assumption \eqref{eq:dicrete score} and the estimate (38) in \cite{conforti2025diffusion}, we get
\begin{equation*}
\begin{aligned}
    & \mathbb{E}\Big[\sum_{k=0}^{K-1}\int_{t_k}^{t_{k+1}}  \dot g(T-t)\Big\|\nabla \ln p^N_{T-t}(\overleftarrow{X}_{t}) - S^*_{T-{t_k}}(\overleftarrow{X}_{t_k})\Big\|^2\d t\Big]\\
    \le ~ & 2\sum_{k=0}^{K-1} \int_{t_k}^{t_{k+1}} \dot g(T-t)\d t\cdot \mathbb{E}\bigg[\Big\|\nabla \ln p^N_{T-t_k}(\overleftarrow{X}_{t_k}) - S^*_{T-t_k}(\overleftarrow{X}_{t_k})\Big\|^2\bigg]\\
    & \quad\quad\quad +2 \sum_{k=0}^{K-1} 
    \mathbb{E}\Big[\int_{t_k}^{t_{k+1}} \dot  g(T-t) \big|\nabla \ln p^N_{T-t}(\overleftarrow{X}_{t}) - \nabla \ln p^N_{T-t_k}(\overleftarrow{X}_{t_k})\big|^2\d t\Big]\\
    \le ~& 2\epsilon^2 + C \sum_{k=0}^{K-1} \int_{t_k}^{t_{k+1}}\dot g(T-t) \big(\mathcal{I}(m^N_{T- t}| m^*) - \mathcal{I}(m^N_{T- t_k}| m^*)  \big)\d t,
\end{aligned}
\end{equation*}
Moreover, as proved in Proposition 3  of \cite{conforti2025diffusion}, we have 
\[
 \mathcal{I}(m^N_{T-t}| m^*)\leq e^{2 (g(T-t)-g(T-s))}\mathcal{I}(m^N_{T-s}| m^*)\le \mathcal{I}(m^N_{T-s}| m^*) \quad\mbox{for $t<s$}. \]
Therefore, 
\begin{align*}
&\sum_{k=0}^{K-1} \int_{t_k}^{t_{k+1}}\dot g(T-t) \big(\mathcal{I}(m^N_{T- t}| m^*) - \mathcal{I}(m^N_{T- t_k}| m^*)  \big)\d t
\\
\le &  \sum_{k=0}^{K-1} \big( g(T-t_{k}) -  g(T-t_{k+1} )\big)\big(\mathcal{I}(m^N_{T- t_{k+1}}| m^*) - \mathcal{I}(m^N_{T- t_k}| m^*)  \big)\\
 \le &  \sum_{k=1}^{K-1}\big( g(T-t_{k+1}) -  2g(T-t_{k} ) + g(T-t_{k-1})\big) \mathcal{I}(m^N_{T- t_k}| m^*)
+ \big( g(T-t_{K-1}) -  g(\delta)\big)\mathcal{I}(m^N_{\delta}| m^*)\\
\le &   \mathcal{I}(m^N_{\delta}| m^*) \left(\sum_{k=1}^{K-1}\big( g(T-t_{k+1}) -  2g(T-t_{k} ) + g(T-t_{k-1})\big) e^{2(\delta -g(T-t_k))}
+  g(T-t_{K-1}) -  \delta \right)\\
=&  \mathcal{I}(m^N_{\delta}| m^*) \left(\sum_{k=0}^{K-1}  \big( g(T-t_{k}) -  g(T-t_{k+1} )\big)\big( e^{2(\delta -g(T-t_{k+1}))}- e^{2(\delta -g(T-t_k))}\big) + \big( g(T) -  g(T-h )\big) e^{2(\delta -g(T))}\right)\\
\le & \mathcal{I}(m^N_{\delta}| m^*) \left(\sum_{k=0}^{K-1} 2 \big( g(T-t_{k}) -  g(T-t_{k+1} )\big)^2 e^{2(\delta -g(T-t_{k+1}))} + h\dot g(T) e^{2(\delta -g(T))}\right)\\
\leq & h \mathcal{I}(m^N_{\delta}| m^*)\left(\sum_{k=0}^{K-1} 2 \int_{t_k}^{t_k+1}|\dot g(T-t)|^2 \d t e^{2(\delta -g(T-t_{k+1}))} + \dot g(T) e^{2(\delta -g(T))}\right)
\end{align*}
Together with \eqref{eq:nabla u bound}, we finally have
\begin{equation*}
    \begin{aligned}
        &\Big(\langle v^{(K)},m_\delta^N\rangle - \langle v^{(0)}, m^N_T\rangle\Big)^2 \\
\le&   C_T \left(\epsilon^2+ h \mathcal{I}(m^N_{\delta}| m^*)\Big(\sum_{k=0}^{K-1} 2 \int_{t_k}^{t_k+1}|\dot g(T-t)|^2 \d t e^{2(\delta -g(T-t_{k+1}))} + \dot g(T) e^{2(\delta -g(T))}\Big)  \right)\|\nabla\varphi\|^2_\infty .
    \end{aligned}
\end{equation*}
and combining with \eqref{diff:mstarmT}, we have
\begin{align*}
\E^{(x^i_0)_i\sim m_0^{\otimes N}}&\left[\Big(G(\mu_{T-\delta}^h)-G(m_{\delta}^N)\Big)^2\right]
\lesssim  \|\varphi\|^2_\infty \Big(e^{-g(T)} + \frac{1}{N}\Big) +C_T\|\nabla\varphi\|^2_\infty 
\epsilon^2\\
&+ C_T \|\nabla\varphi\|^2_\infty \mathcal{I}(m^N_{\delta}| m^*)h \Big(\sum_{k=0}^{K-1}  \int_{t_k}^{t_k+1}|\dot g(T-t)|^2 \d t e^{2(\delta -g(T-t_{k+1}))} +\dot g(T) e^{2(\delta -g(T))}\Big).
\end{align*}
Finally, the difference $G(m^N_\delta) - G(m^N_0)$ follows from It\^o's formula, and we obtain the desired estimate. 
\end{proof}



\begin{proof}[Proof of Corollary \ref{cor:ft_opt_pb}]
Denote $\mathcal{G}(g(t)) = \int\limits_0^T | \dot g(t) |^2 e^{ - 2 g(t) } dt$.
Consider a variation $ \delta_t $ of $g(t)$ verifying $  \delta_t(0) = 0, \delta_t(T)=0$. Then we have 
\begin{align*}
& \mathcal{G}( g(t) + \delta_t ) - \mathcal{G}( g(t) ) = A_{ g }( \delta_t ) + o( \delta_t ), \text{ where} \\
& A_{g}( \delta_t ) = \int\limits_0^T 2 \dot g(t) \dot \delta_t e^{ - 2 g(t) } dt - \int\limits_0^T 2 | \dot g(t) |^2 \delta_t e^{ - 2 g(t) } dt. 
\end{align*}
Integrating by parts and taking the boundary conditions into account, we obtain the following first order condition on the solution of the optimization problem $g^*_t$:
\[ \ddot g^*_t - | \dot g^*_t |^2 = 0. \]
Integrating this ODE we obtain that $ g^*_t = - \ln( -t + C_1 ) + C_2 $. Finally, using the boundary conditions we obtain that $ C_1 = \frac{T}{ 1 - e^{-T'} } $, $ C_2 = \ln\left( \frac{T}{ 1 - e^{-T'} } \right) $,
which yields the required formula. 
\end{proof}

\section{Conclusion}
In this paper, we have proved the convergence of denoising diffusion models under a weaker distance than what is usually considered in the literature. More precisely, we have derived an upper-bound on the weak error of a discretization of the denoising process under some regularity assumptions on the test functions used in the computation of the weak errors and regularity assumptions on the noisy score of the empirical measure. Our main result is a \emph{dimensionless} upper bound on the weak convergence error of diffusion models. To the best of our knowledge this bound is new. We show how our proof can be extended to cover the case of discrete diffusion models. Finally, we show that we recover the well-known Flow Matching (or Rectified Flow) schedule by optimizing the upper bound of our weak error analysis. 

While our analysis is tight in the Euclidean case, in future work we would like to extend our weak error bounds to the case of general Denoising Markov Models combining our analysis with the infinitesimal generator framework of \cite{benton2024denoising}. 

\subsubsection*{Acknowledgements} 
A.\,K.'s research is supported by PEPR PDE-AI project. Z.\,R's research is supported by the Finance For Energy Market Research Center,  the France 2030 grant (ANR-21-EXES-0003) and PEPR PDE-AI project.

\bibliography{ref}
\bibliographystyle{tmlr}

\appendix


\section{Appendix}
 Here we collect the proofs for the technical lemmas.

 \subsection{Proof of Lemma \ref{lem:regularity}}

\begin{proof}[Proof of Lemma \ref{lem:regularity}]
First,  it  follows directly from  \cite[Lemma 5.1]{10.1214/15-AOP1076}  that 
\[D_m U(t,m,x) = \mathbb{E}\left[\partial_x\overleftarrow{X}^{*,t,x}_{T-\delta} \cdot D_m G\big(\mu^*_{T-\delta-t}(m), \overleftarrow{X}^{*,t,x}_{T-\delta}\big) \right],\]
where $\overleftarrow{X}^{*,t,x}$ the process defined in \eqref{eq:backward_eqn_S} starting from $x$ at time $t$.  
Then, the bound of the intrinsic derivative of $U$, $D_m U$,  follows from our Lipschitz assumption on $b$ and $S^*$ together with the boundedness of $D_m G$.

In order to estimate the linear derivative, we perform the following lifting, enabling us to reuse the estimate for the intrinsic derivative.  Define the operator $\mathcal{T}:\mathcal{C}(\R^d)\to\mathcal{C}(\R^{d+1})$ by 
\begin{align*}\varphi(x)\mapsto \mathcal{T}\varphi(x,y)\coloneqq\varphi(x)y,
\end{align*}
and the adjoint operator $\mathcal{T}^*:\mathcal{P}(\R^{d+1})\to\mathcal{P}(\R^{d})$ by
\begin{align*}
    \rho\mapsto \mathcal{T^*}(\rho)(\d x)\coloneqq\int_\R y\rho(\d x\d y).
\end{align*}
Define $\widetilde G:\mathcal{P}(\R^{d+1})\to\R$ by 
$$
\widetilde G(\rho):=G\big(\mathcal{T}^*(\rho)\big).
$$
For $\rho_1,\rho_2\in\mathcal{P}(\R^{d+1})$, it holds, by definition of linear derivative,
\begin{align*}
    \widetilde G(\rho_1)-\widetilde G(\rho_2)&=G\big(\mathcal{T}^*(\rho_1)\big)-G\big(\mathcal{T}^*(\rho_2)\big)
    \\
    &=\int_0^1\Big\langle\frac{\delta G}{\delta m}\big(\lambda \mathcal{T}^*(\rho_1)+(1-\lambda)\mathcal{T}^*(\rho_2),\cdot\big),\mathcal{T}^*(\rho_1)-\mathcal{T}^*(\rho_2)\Big\rangle \d \lambda
    \\
    &=\int_0^1\Big\langle\mathcal{T}\frac{\delta G}{\delta m}\big(\lambda \mathcal{T}^*(\rho_1)+(1-\lambda)\mathcal{T}^*(\rho_2),\cdot\big),\rho_1-\rho_2\Big\rangle \d \lambda .
\end{align*}
Hence, we have for all $\rho\in \mathcal{P}(\R^{d+1})$, 
$$
\frac{\delta \widetilde G}{\delta \rho}(\rho)=\mathcal{T}\frac{\delta G}{\delta m}\big(\mathcal{T}^*\rho\big),
$$
that is for all $x\in\R^d, y\in\R$, we have
\begin{equation}\label{wildeggfirst}
\frac{\delta \widetilde G}{\delta \rho}\big(\rho,(x,y)\big)=\frac{\delta G}{\delta m}\big(\mathcal{T}^*\rho,x\big)y.
\end{equation}
Similarly, we have
\begin{equation}\label{wildeggsecond}
\frac{\delta^2 \widetilde G}{\delta \rho^2}\big(\rho,(x,y),(x',y')\big)=\frac{\delta G}{\delta m}\big(\mathcal{T}^*\rho,x,x'\big)yy'.
\end{equation}
Consider the dynamic:
\begin{equation*}
\begin{cases}
&\d \tilde X_t = \Big(- b(\tilde X_t)  +   \frac{\alpha^2 +1}{2}S^*_{T-t}(\tilde X_t)\Big)dt +  \alpha \d\overleftarrow{W}_t;
\\
&\d Y_t=0.
\end{cases}
\end{equation*}
and $\rho^*_t(\rho)=\text{Law}(\tilde X_t,Y_t)$ given Law$(\tilde X_0, Y_0)=\rho$. Define $\widetilde U:[0,T]\times\mathcal{P}(\R^{d+1})\to\R$ by
$$
\widetilde U(t,\rho)=\widetilde G\big(\rho^*_{T-\delta-t}(\rho)\big).
$$

Let $\rho:=m\otimes\delta_1$, where $\delta_1$ is the Dirac measure at $1$. It is not hard to see that  $\rho^*_t(\rho)=\mu^*_t(m)\otimes\delta_1$, and
$$
\mathcal{T}^*(\rho)=m
,\quad\mathcal{T}^*\big(\rho^*_t(\rho)\big)=\mu^*_t(m)
$$
then
$$
\widetilde U(t,\rho)=\widetilde G\big(\rho^*_{T-\delta-t}(\rho)\big)=G\big(\mu^*_{T-\delta-t}(m)\big)=U(t,m).
$$
Then, by \eqref{wildeggfirst} and \eqref{wildeggsecond}, we have
\begin{align*}
\frac{\delta \widetilde U}{\delta \rho}\big(t,\rho,(x,y)\big)=\frac{\delta }{\delta \rho}\widetilde G\big(\rho^*_{T-\delta-t}(\rho),(x,y)\big)=\frac{\delta}{\delta m}G\big(\mu^*_{T-\delta-t}(m),x\big)y=\frac{\delta U}{\delta m}\big(t,m,x\big)y,
\end{align*}
and similarly
\begin{align*}
\frac{\delta^2 \widetilde U}{\delta \rho^2}\big(t,\rho ,(x,y),(x',y')\big)=\frac{\delta^2 U}{\delta m^2}\big(t,m,x,x'\big)yy'.
\end{align*}
Therefore, we establish the link between the linear derivatives $\frac{\delta U}{\delta m}, \frac{\delta^2 U}{\delta m^2}$ with the intrinsic derivatives $ \partial_y\frac{\delta \widetilde U}{\delta \rho}, \partial^2_{yy'}\frac{\delta^2 \widetilde U}{\delta \rho^2}$:
\[\frac{\delta U}{\delta m}(t,m,x) = \partial_y\frac{\delta \widetilde U}{\delta \rho}(t,m,(x,y)), \quad \frac{\delta^2 U}{\delta m^2}(t,m,x,x') = \partial^2_{yy'}\frac{\delta^2 \widetilde U}{\delta \rho^2}(t,\rho, (x,y),(x',y')).\]
Finally, we again apply \cite[Lemma 5.1, Lemma 5.2]{10.1214/15-AOP1076} to obtain the desired bounds.
\end{proof}

 \subsection{Proof of Lemma \ref{lem:UN}}

Note that
$$
\langle \phi,m_0^N\rangle=\frac{1}{N}\sum_{i=1}^N \phi(X^i),\;\;\langle \phi,m_T^N\rangle=\frac{1}{N}\sum_{i=1}^N\mathbb{E}\big[\phi(X_T)|X_0=X^i\big],
$$
where $X^1,..,X^N$ are i.i.d copies following the law $m_0$.

    \begin{proof}[Proof of Lemma \ref{lem:UN}]
    This essentially repeats the proof of Lemma 5.10 in \cite{10.1214/19-EJP298}. Since $\varphi$ is differentiable, we have the following expansion:
    \begin{align*}
        \varphi(m^N_T)-&\varphi(m_T) =\int_0^1\Big\langle\frac{\delta\varphi}{\delta m}\big(\lambda m_T^N+(1-\lambda)m_T,\cdot\big),m_T^N-m_T\Big\rangle\d \lambda
        \\
        &=\underbrace{\int_0^1\Big\langle\frac{\delta\varphi}{\delta m}\big(\lambda m_T^N+(1-\lambda)m_T,\cdot\big)-\frac{\delta\varphi}{\delta m}\big(m_T,\cdot\big),m_T^N-m_T\Big\rangle\d \lambda}_{S_1}+\underbrace{\Big\langle\frac{\delta\varphi}{\delta m}\big(m_T,\cdot\big),m_T^N-m_T\Big\rangle}_{S_2}.
    \end{align*}
    \textbf{Step 1. } We first deal with the term $S_2$ This may be rewritten as
    \begin{align*}
        S_2=\frac{1}{N}\sum_{i=1}^N\bigg(\E\bigg[\frac{\delta\varphi}{\delta m}\big(m_T,X_T\big)\Big|X_0=X^i\bigg]-\widetilde\E\bigg[\frac{\delta\varphi}{\delta m}\big(m_T,\widetilde X_T\big)\bigg]\bigg),
    \end{align*}
    where under $\widetilde \E$, $\d \widetilde X_t=b(\widetilde X_t)\d t+\d \widetilde W_t$ and Law$(\widetilde X_0)=m_0$. Therefore, we have
    \begin{align*}
        \E[S_2^2]&=\text{Var}\bigg(\frac{1}{N}\sum_{i=1}^N\E\bigg[\frac{\delta\varphi}{\delta m}\big(m_T,X_T\big)\Big|X_0=X^i\bigg]\bigg)=\frac{1}{N^2}\sum_{i=1}^N\text{Var}\bigg(\E\bigg[\frac{\delta\varphi}{\delta m}\big(m_T,X_T\big)\Big|X_0=X^i\bigg]\bigg)
        \\
        &=\frac1N\bigg(\E\bigg[\E\bigg[\frac{\delta\varphi}{\delta m}\big(m_T,X_T\big)\Big|X_0=X^i\bigg]^2\bigg]-\widetilde\E\bigg[\frac{\delta\varphi}{\delta m}\big(m_T,X_T\big)\bigg]^2\bigg)\leq \frac{\big\|\frac{\delta\varphi}{\delta m}\big\|_\infty^2}{N}.
    \end{align*}
    \textbf{Step 2. } We turn to the term $S_1$, which we first rewrite in the form
    \begin{align*}
        S_1 =\frac1N\sum_{i=1}^N\int_0^1\varphi_\lambda^i\d \lambda,
    \end{align*}
    where we define, for $i=1,\cdots,n$ and $\lambda \in [0, 1]$,
    \begin{equation*}
        \begin{aligned}
            \varphi_\lambda^i:=\E\bigg[\frac{\delta\varphi}{\delta m}\big(\lambda m_T^N+&(1-\lambda)m_T,X_T\big)-\frac{\delta \varphi}{\delta m}(m_T,X_T)\Big|X_0=X^i\bigg]
            \\
            -&\widetilde\E\bigg[\frac{\delta\varphi}{\delta m}\big(\lambda m_T^N+(1-\lambda)m_T,\widetilde X_T\big)-\frac{\delta \varphi}{\delta m}(m_T,\widetilde X_T)\bigg],
        \end{aligned}
    \end{equation*}
    where $\widetilde \E$ is taken independently with $m_T^N$ (viewing $m_T^N$ as non-random) and $\widetilde X_T$ is given by the dynamic $\d \widetilde X_t=b(\widetilde X_t)\d t+ \d \widetilde W_t$ with Law$(\widetilde X_0)=m_0$. Now, we have
    $$
    S_1^2\leq \frac{1}{N^2}\int_0^1\bigg(\sum_{i=1}^N\varphi_\lambda^i\bigg)^2\d \lambda=\frac{1}{N^2}\int_0^1\bigg(\sum_{i=1}^N\big(\varphi_\lambda^i\big)^2+2\sum_{i<j}\varphi_\lambda^i\varphi_\lambda^j\bigg)\d \lambda.
    $$
    Note that $|\varphi_\lambda^i|\leq 4\big\|\frac{\delta\varphi}{\delta m}\big\|_\infty$, we have
    $$
    \frac{1}{N^2}\int_0^1\sum_{i=1}^N\big(\varphi_\lambda^i\big)^2\d \lambda\leq \frac{16\big\|\frac{\delta\varphi}{\delta m}\big\|_\infty^2}{N}.
    $$
    It remains to bound $\varphi_\lambda^i\varphi_\lambda^j$ for $i\neq j$. Define $m_T^{N,-i,-j}$ by 
    $$
    \langle\phi,m_T^{N,-i,-j}\rangle:=\frac{1}{N-2}\sum_{k\neq i,j}\mathbb{E}\big[\phi(X_T)|X_0=X^k\big],
    $$
    and $\varphi_\lambda^{i,-j,-k}$ by
    \begin{equation*}
        \begin{aligned}
            \varphi_\lambda^{i,-j,-k}:=\E\bigg[\frac{\delta\varphi}{\delta m}\big(\lambda m_T^{N,-j,-k}+&(1-\lambda)m_T,X_T\big)-\frac{\delta \varphi}{\delta m}(m_T,X_T)\Big|X_0=X^i\bigg]
            \\
            -&\widetilde\E\bigg[\frac{\delta\varphi}{\delta m}\big(\lambda m_T^{N,-j,-k}+(1-\lambda)m_T,\widetilde X_T\big)-\frac{\delta \varphi}{\delta m}(m_T,\widetilde X_T)\bigg].
        \end{aligned}
    \end{equation*}
    Then, it is not hard to see that $\varphi_\lambda^{i,-i,-j}$ is conditionally independent of $\varphi_\lambda^{j,-i,-j}$ given $\{X_k\}_{k\neq i,j}$ and therefore 
    \begin{equation}\label{equ:varphi-i-j0}
    \E\big[\varphi_\lambda^{i,-i,-j}\varphi_\lambda^{j,-i,-j}\big]=0. 
    \end{equation}
    Observe that $\varphi_\lambda^i-\varphi_\lambda^{i,-j,-k}$ can be calculated by 
    \begin{equation}\label{varphiivarphiijldiff}
        \begin{aligned}
            \E\bigg[\frac{\delta\varphi}{\delta m}\big(\lambda m_T^{N}+&(1-\lambda)m_T,X_T\big)-\frac{\delta\varphi}{\delta m}\big(\lambda m_T^{N,-j,-k}+(1-\lambda)m_T,X_T\big)\Big|X_0=X^i\bigg]
            \\
            -&\widetilde\E\bigg[\frac{\delta\varphi}{\delta m}\big(\lambda m_T^{N}+(1-\lambda)m_T,X_T\big)-\frac{\delta\varphi}{\delta m}\big(\lambda m_T^{N,-j,-k}+(1-\lambda)m_T,X_T\big)\bigg],
        \end{aligned}
    \end{equation}
    and
    \begin{align*}
    \frac{\delta\varphi}{\delta m}\big(\lambda m_T^{N}+&(1-\lambda)m_T,X_T\big)-\frac{\delta\varphi}{\delta m}\big(\lambda m_T^{N,-j,-k}+(1-\lambda)m_T,X_T\big)
    \\
    &=\int_0^1\Big\langle\frac{\delta^2\varphi}{\delta m^2}\big(\lambda\gamma m_T^N+\lambda(1-\gamma) m_T^{N,-i,-j}+(1-\lambda)m_T,X_T,\cdot\big),\lambda\big(m_T^N-m_T^{N,-i,-j}\big)\Big\rangle\d \gamma.
    \end{align*}
    Recall that 
    \begin{align*}
    \langle\phi,m_T^{N}-m_T^{N,-i,-j}\rangle:=\frac{1}{N}&\Big(\E\big[\phi(X_T)|X_0=X^i\big]+\E\big[\phi(X_T)|X_0=X^j\big]\Big)
    \\
    &-\frac{2}{N(N-2)}\sum_{k\neq i,j}\mathbb{E}\big[\phi(X_T)|X_0=X^k\big],
    \end{align*}
    we have 
    \begin{align*}
    \bigg|\frac{\delta\varphi}{\delta m}\big(\lambda m_T^{N}+(1-\lambda)m_T,X_T\big)-&\frac{\delta\varphi}{\delta m}\big(\lambda m_T^{N,-j,-k}+(1-\lambda)m_T,X_T\big)\bigg|\leq \frac{2\lambda\big\|\frac{\delta^2\varphi}{\delta m^2}\big\|_\infty}{N}+\frac{2\lambda\big\|\frac{\delta^2\varphi}{\delta m^2}\big\|_\infty}{N-2}
    \\
    &\leq \frac{8\big\|\frac{\delta^2\varphi}{\delta m^2}\big\|_\infty}{N},
    \end{align*}
    where the last inequality holds because without loss of generality, we could assume $N\geq 3$ and $3(N-2)\geq N$. Plugging back into \eqref{varphiivarphiijldiff}, we get
    $$
    \big|\varphi_\lambda^i-\varphi_\lambda^{i,-j,-k}\big|\leq \frac{16\big\|\frac{\delta^2\varphi}{\delta m^2}\big\|_\infty}{N}
    $$
    and together with \eqref{equ:varphi-i-j0}, we have
    \begin{align*}
        \bigg|\E\big[\varphi_\lambda^i\varphi_\lambda^j\big]\bigg|&=\bigg| \E\big[\big|\varphi_\lambda^{i,-i,-j}\varphi_\lambda^{j,-i,-j}\big]+\E\Big[\varphi_\lambda^i(\varphi_\lambda^{j}-\varphi_\lambda^{j,-i,-j}\big)\Big]+\E\Big[\big(\varphi_\lambda^i-\varphi_\lambda^{i,-i,-j}\big)\varphi_\lambda^{j,-i,-j}\Big]\bigg|
        \\
        &\leq \E\Big[\big|\varphi_\lambda^i\big|\big|\varphi_\lambda^{j}-\varphi_\lambda^{j,-i,-j}\big|\Big]+\E\Big[\big|\varphi_\lambda^{j,-i,-j}\big|\big|\varphi_\lambda^{i}-\varphi_\lambda^{i,-i,-j}\big|\Big]\leq \frac{64\big\|\frac{\delta^2\varphi}{\delta m^2}\big\|_\infty\big\|\frac{\delta\varphi}{\delta m}\big\|_\infty}{N}.
    \end{align*}
    Therefore, we have
    $$
    \E\big[S_1^2\big]\leq \frac{16\big\|\frac{\delta\varphi}{\delta m}\big\|_\infty^2}{N}+\frac{128\big\|\frac{\delta^2\varphi}{\delta m^2}\big\|_\infty\big\|\frac{\delta\varphi}{\delta m}\big\|_\infty}{N},
    $$
    and thus
    $$
    \E\Big[\big|\varphi(m_T)-\varphi(m_T^N)\big|^2\Big]=\E\big[\big|S_1+S_2\big|^2\big]\leq \frac{C}{N},$$
    where $C$ depends only on the bounds on the first and second-order derivatives.    
\end{proof}

\subsection{Proof of Lemma \ref{lem:estimate uk}}

\begin{proof}[Proof of Lemma \ref{lem:estimate uk}]
The bound of $\|u^{(k)}\|$ and $\|v^{(k)}\|$ can easily be obtained by its probability representation. As for the gradients' bound, For $t\in[t_k, t_{k+1})$, given $\overleftarrow{X}^h_{t_{k}}$, $\overleftarrow{X}^h_t$ follows an OU process, then we can write 
$$
\overleftarrow{X}^h_{t_{k+1}}\sim\text{N}\bigg(\overleftarrow{X}^h_te^{-g(T-t)+g(T-t_{k+1})}+S_{T-t_k}^*(\overleftarrow{X}^h_{t_{k}})\big(1-e^{-g(T-t)+g(T-t_{k+1})}\big),\frac{1-e^{-2(g(T-t)-g(T-t_{k+1})}}{2}\bigg).
$$
Then,
\begin{align*}
u^{(k)}(t,x;x_k)&=\E\Big[v^{(k+1)}(\overleftarrow{X}^h_{t_{k+1}})\Big|\overleftarrow{X}^h_{t} =x,\overleftarrow{X}^h_{t_k} =x_k\Big]=\E\Big[v^{(k+1)}\big(\sigma^{(k)}(t)Z+\mu^{(k)}(t,x;x_k)\big)\Big],
\end{align*}
and
\begin{align*}
v^{(k)}(x)&=\E\Big[v^{(k+1)}(\overleftarrow{X}^h_{t_{k+1}})\Big| \overleftarrow{X}^h_{t_k} =x\Big]=\E\Big[v^{(k+1)}\big(\sigma^{(k)}(t_k)Z+\mu^{(k)}(t_k,x;x)\big)\Big],
\end{align*}
where $Z\sim$ N$(0,I_d)$ is standard normal and 
$$
(\sigma^{(k)})^2(t)\coloneqq\frac{1-e^{-2(g(T-t)-g(T-t_{k+1}))}}{2}, \;\mu^{(k)}(t,x;x_k)=xe^{-g(T-t)+g(T-t_{k+1})}+S_{T-t_k}^*(x_k)\big(1-e^{-g(T-t)+g(T-t_{k+1})}\big)
$$
Then,
\begin{align*}
\partial_{x_i} &v^{(k)}(x)=\E\bigg[\frac{\d\, v^{(k+1)}\big(\sigma^{(k)}(t_k)Z+\mu^{(k)}(t_k,x;x)\big)}{\d x_i}\bigg]
\\
&=\E\bigg[\sum_{j=1}^d\big(1-e^{-g(T-t_k)+g(T-t_{k+1})}\big)\partial_{x_i}(S^*_{T-t_k}(x))_j\cdot\partial_{x_j}v^{(k+1)}\big(\sigma^{(k)}(t_k)Z+\mu^{(k)}(t_k,x;x)\big)
\\
&\qquad\qquad+e^{-g(T-t_k)+g(T-t_{k+1})}\partial_{x_i}v^{(k+1)}\big(\sigma^{(k)}(t_k)Z+\mu^{(k)}(t_k,x;x)\big)\bigg],
\end{align*}
which leads to 
\begin{align}\label{tmp1}
\|\nabla v^{(k)}\|_\infty\leq \|\nabla v^{(k+1)}\|_\infty\cdot \Big(e^{-g(T-t_k)+g(T-t_{k+1})}+\big(1-e^{-g(T-t_{k})+g(T-t_{k+1})}\big)\|S^*\|_{\text{Lip},1,\infty}\Big).
\end{align}
By iteration, we get 
\begin{equation}\label{tmp2}
\begin{aligned}
\big\|\nabla v^{(k)}\big\|_\infty&\leq \|\nabla v^{(K)}\|_\infty e^{\sum_{\ell=k}^{K-1}(g(T-t_{k+1})-g(T-t_{k}))}\prod_{\ell=k}^{K-1}\Big(1+\big(e^{g(T-t_\ell)-g(T-t_{\ell+1})}-1\big)\|S^*\|_{\text{Lip}}\Big)
\\
&= \|\nabla \varphi\|_\infty \exp\bigg\{g(T-t_K)-g(T-t_k)+\sum_{\ell=k}^{K-1}\ln\Big(1+\big(e^{g(T-t_\ell)-g(T-t_{\ell+1})}-1\big)\|S^*\|_{\text{Lip},1,\infty}\Big)\bigg\}
\\
&\leq \|\nabla \varphi\|_\infty \exp\bigg\{g(T-t_K)-g(T-t_k)+\|S^*\|_{\text{Lip},1,\infty}\sum_{\ell=k}^{K-1}\big(1-e^{g(T-t_\ell)-g(T-t_{\ell+1})}\big)\bigg\}
\\
&\leq \|\nabla \varphi\|_\infty \exp\bigg\{g(T-t_k)-g(T-t_K)+\|S^*\|_{\text{Lip},1,\infty}\sum_{\ell=k}^{K-1}g(T-t_\ell)-g(T-t_{\ell+1})\bigg\}
\\
&\leq \|\nabla \varphi\|_\infty \exp\bigg\{\Big(g(T-t_k)-g(T-t_K)\Big)\Big(1+\|S^*\|_{\text{Lip},1,\infty}\Big)\bigg\},
\end{aligned}
\end{equation}
which leads to the desired bound of $\|\nabla v^{(k)}\|_\infty.$
Using the similar argument in \eqref{tmp1} , we can also get for all $y\in\R^d$,
\begin{align*}
\partial_{x_i} &u^{(k)}(t,x;y)=\E\bigg[\frac{\d\, v^{(k+1)}\big(\sigma^{(k)}(t)Z+\mu^{(k)}(t,x;y)\big)}{\d x_i}\bigg]
\\
&=\E\Big[e^{g(T-t)-g(T-t_{k+1})}\partial_{x_i}v^{(k+1)}\big(\sigma^{(k)}(t)Z+\mu^{(k)}(t_k,x;x_k)\big)\Big],
\end{align*}
which means for all $y\in\R^d$,
$$
\big\|\nabla u^{(k)}(t,\cdot;y)\big\|_\infty\leq e^{g(T-t)-g(T-t_{k+1})} \big\|\nabla v^{(k+1)}\big\|_\infty.
$$
Combining with \eqref{tmp2}, we have for all $y\in\R^d$.
\begin{equation}\label{nablaubdd}
\begin{aligned}
    \big\|\nabla u^{(k)}(t,\cdot;y)\big\|_\infty
    &\leq \|\nabla \varphi\|_\infty e^{(1+\|S^*\|_{\text{Lip},1,\infty})g(T)}.
\end{aligned}
\end{equation}
which completes the proof of the lemma.
\end{proof}

\end{document}